\newtheorem{theorem}{Theorem}
\newtheorem{lemma}{Lemma}           % Optional: for lemmas
\begin{document}
%
% paper title
% Titles are generally capitalized except for words such as a, an, and, as,
% at, but, by, for, in, nor, of, on, or, the, to and up, which are usually
% not capitalized unless they are the first or last word of the title.
% Linebreaks \\ can be used within to get better formatting as desired.
% Do not put math or special symbols in the title.
\title{Critically-Damped Higher-Order Langevin Dynamics for Generative Modeling}
%
%
% author names and IEEE memberships
% note positions of commas and nonbreaking spaces ( ~ ) LaTeX will not break
% a structure at a ~ so this keeps an author's name from being broken across
% two lines.
% use \thanks{} to gain access to the first footnote area
% a separate \thanks must be used for each paragraph as LaTeX2e's \thanks
% was not built to handle multiple paragraphs
%

\author{Benjamin~Sterling,~\IEEEmembership{Member,~IEEE,}
        Chad~Gueli,%
        \thanks{Benjamin Sterling and Chad Gueli contributed equally to this work.}
        M\'onica~F.~Bugallo,~\IEEEmembership{Senior~Member,~IEEE}
}

% The paper headers
\markboth{IEEE Transactions on Pattern Analysis and Machine Intelligence,~Vol.~XX, No.~XX, Month~Year}%
{Sterling \MakeLowercase{\textit{et al.}}: Critically-Damped Higher-Order Langevin Dynamics}
% The only time the second header will appear is for the odd numbered pages
% after the title page when using the twoside option.
% 
% *** Note that you probably will NOT want to include the author's ***
% *** name in the headers of peer review papers.                   ***
% You can use \ifCLASSOPTIONpeerreview for conditional compilation here if
% you desire.

% If you want to put a publisher's ID mark on the page you can do it like
% this:
%\IEEEpubid{0000--0000/00\$00.00~\copyright~2015 IEEE}
% Remember, if you use this you must call \IEEEpubidadjcol in the second
% column for its text to clear the IEEEpubid mark.

% use for special paper notices
%\IEEEspecialpapernotice{(Invited Paper)}

% make the title area
\maketitle

% As a general rule, do not put math, special symbols or citations
% in the abstract or keywords.
\begin{abstract}

Denoising diffusion probabilistic models (DDPMs) represent an entirely new class of generative AI methods that have yet to be fully explored. They use Langevin dynamics, represented as stochastic differential equations, to describe a process that transforms data into noise, the forward process, and a process that transforms noise into generated data, the reverse process. Many of these methods utilize auxiliary variables that formulate the data as a ``position" variable, and the auxiliary variables are referred to as ``velocity", ``acceleration", etc. In this sense, it is possible to ``critically damp" the dynamics. Critical damping has been successfully introduced in Critically-Damped Langevin Dynamics (CLD) and Critically-Damped Third-Order Langevin Dynamics (TOLD++), but has not yet been applied to dynamics of arbitrary order. The proposed methodology generalizes Higher-Order Langevin Dynamics (HOLD), a recent state-of-the-art diffusion method, by introducing the concept of critical damping from systems analysis. Similarly to TOLD++, this work proposes an optimal set of hyperparameters in the $n$-dimensional case, where HOLD leaves these to be user defined. Additionally, this work provides closed-form solutions for the mean and covariance of the forward process that greatly simplify its implementation. Experiments are performed on the CIFAR-10 and CelebA-HQ $256 \times 256$ datasets, and validated against the FID metric.

\end{abstract}

% Note that keywords are not normally used for peerreview papers.
\begin{IEEEkeywords}
    diffusion models, stochastic differential equations, deep learning, Langevin dynamics, critical damping
\end{IEEEkeywords}

% For peer review papers, you can put extra information on the cover
% page as needed:
% \ifCLASSOPTIONpeerreview
% \begin{center} \bfseries EDICS Category: 3-BBND \end{center}
% \fi
%
% For peerreview papers, this IEEEtran command inserts a page break and
% creates the second title. It will be ignored for other modes.
\IEEEpeerreviewmaketitle

\section{Introduction}
% The very first letter is a 2 line initial drop letter followed
% by the rest of the first word in caps.
% 
% form to use if the first word consists of a single letter:
% \IEEEPARstart{A}{demo} file is ....
% 
% form to use if you need the single drop letter followed by
% normal text (unknown if ever used by the IEEE):
% \IEEEPARstart{A}{}demo file is ....
% 
% Some journals put the first two words in caps:
% \IEEEPARstart{T}{his demo} file is ....
% 
% Here we have the typical use of a "T" for an initial drop letter
% and "HIS" in caps to complete the first word.
\IEEEPARstart{F}{undamentally}, generative AI converts a high-dimensional, and otherwise intractable, data distribution into a simple distribution. Today, the dominant methodology for image, video, and non-linguistic sample generation is the denoising diffusion probabilistic model (DDPM) \cite{diffusion2015, diffusiondenoising}. Our proposed enhancement to this framework accelerates convergence because modeling the $n$th order diffusion-time derivative smooths the denoising process. Critically-damped Langevin dynamics (CLD) improves upon the standard diffusion process by adding an auxiliary variable, referred to as velocity \cite{dockhorn2021score}. The addition of velocity smooths the data variable's convergence from the data distribution to latent distribution, and vice versa, because the Ornstein-Uhlenbeck process is not directly applied to the data variable. The fact that the Brownian motion is not directly added to the data but instead steered by velocity is the reason for smoothed diffusion trajectories. Third-Order Langevin Dynamics (TOLD) \cite{hold} improves upon CLD by simplifying the forward and backward stochastic processes and adding a second auxiliary variable, acceleration.

The concept of ``critical damping" is borrowed from classical systems theory to describe the family of optimal parameter choices of the forward process. Non-optimal parameter choices classically result in ``overshooting" or ``undershooting" the controls in a system of interest. Critically-damped third-order Langevin dynamics (TOLD++) \cite{sterling} advances TOLD by deriving a set of critically-damped third-order dynamics and proving that these dynamics result in optimal convergence. Beyond optimality, critically damping the third-order system simplifies the TOLD algorithm, reducing computational cost. The invention of TOLD++ motivates the possibility of applying critical damping to arbitrarily higher orders of Langevin dynamics; this is the precise goal of this paper. We propose critically-damped higher-order Langevin dynamics (HOLD++), and further argue that critical damping is optimal in the $n$th dimensional case. The contributions of this paper are listed as follows:
\begin{itemize}
    \item Closed-form expressions are derived for the mean and covariance of the forward process, greatly simplifying the method's implementation.
    \item The HOLD++ algorithm is derived and presented for arbitrary order dynamics.
    \item Optimality of critical damping is proven.
    \item The parameter choice that results in a critically-damped forward process is derived.
    \item The method is validated on the (TODO toy dataset), CIFAR-10, and CelebA-HQ $256 \times 256$ datasets.
\end{itemize}
The next section details relevant works. Section \ref{sec:simple} describes the most simple diffusion algorithm, while section \ref{sec:methods} details how it is expanded to an arbitrary number of auxiliary variables. Section \ref{sec:optimality} proves the aforementioned optimality, and section \ref{sec:criticaldamping} derives the critically-damped parameter choices in detail. Section \ref{sec:experiments} presents the experimental results, and section \ref{sec:conclusion} is the conclusion.

\section{Related Works}

EXPLICITLY DESCRIBE THE DIFFERENCE BETWEEN HOLD AND HOLD++!!!
Here we provide a review of relevant works. Underdamped diffusion bridges \cite{blessing2025underdamped} have recently been proposed to accelerate the dynamics of both the forward and backwards processes with control neural networks, that operate by minimizing a suitable divergence between these processes. Their approach fundamentally differs from this work's as we only consider constantly parameterized stochastic processes, and instead consider optimality only with respect to the speed of convergence of the forward process. One key benefit of this approach is that there exists a closed-form solution for our optimal forward process. It is important to note that in reference to the works of \cite{Cheng2017UnderdampedLM,blessing2025underdamped}, that damping in this context is defined a bit differently: in these other works, Langevin dynamics without auxiliary variables are overdamped as they do not model system acceleration, and Langevin dynamics with one auxiliary variable are under damped. 

The work of \cite{singhal2023where} pioneers a similar approach to this one with the use of solving differential equations for the mean and covariance of the forward process. However, this is very much a precursor to the HOLD methodology, as it does not consider specific forward processes, only the general strategy. A different perspective is taken by \cite{DBLP:conf/icml/SinghalGR24} that considers non-linear forward diffusion paths, but introducing these non-linear dynamics results in a loss of closed-form sampling distributions, requiring local Taylor series approximations.

Since its invention, higher-order Langevin dynamics have found many applications, including the tasks of voice generation \cite{shi2024langwave} and noisy image restoration \cite{shi2024noisy}. They have even been adapted to operate on manifolds \cite{riemanniandiffusion}, and Lie groups \cite{liediffusion}. The latter has recently found application in generative modeling of chemical structures \cite{cornet2025kinetic}.

The main difference between HOLD and HOLD++ is that HOLD allows for arbitrary selection of parameters for the forward process, but it also requires custom derivations for each chosen parameterization using Putzer's Method \cite{putzer}. This work argues that the parameterization derived in HOLD is overdamped, meaning that it exhibits suboptimal modes of convergence. Therefore, HOLD++ derives the optimal set of parameters from a damping perspective and additionally simplifies the computational burden of this method.

More generally, the focus of modern methods is to shorten and smooth the trajectories of samples from the latent space to sample space. JKO-iFlow is a continuous normalizing flow whose trajectories are simplified by including a Wasserstein regularization term \cite{jko}. Subspace Diffusion \cite{subspacediffusion} and Wavelet Diffusion \cite{guth2022wavelet} take a similar approach by shortening the structure of diffusion processes in certain dimensions by exploiting convenient representations with subspace or wavelet decompositions. From this perspective, it will be demonstrated that HOLD++ similarly shortens and smooths the diffusion trajectory for each higher-order that is applied. 

While JKO-iFlow explicitly regularizes against the Wasserstein distance, the rest of these methods provide some form of implicit regularization. This work's methodology allows for an arbitrary amount of regularization by using a model order of $n$ as this controls the degree to which the diffusion process is smoothed.

%(TODO: WHY THEY ARE RELATED TO MY WORK, AND HOW MY WORK IS DIFFERENT)
%(TODO: Provide a mathematical formulation of the problem)
\section{The Simple Case, $n=1$}
\label{sec:simple}

In attempt to lessen the difficulty in approaching the multidimensional case, this section details training and inference on the simpler forward process corresponding to a model order of $1$. The forward process is taken to be:

\[d\bm{x}_t = -\xi \bm{x}_t dt + \sqrt{2\xi L^{-1}} d\bm{w}\]

where $\xi$ and $L^{-1}$ are algorithmic parameters. This is the Ornstein Uhlenbeck Process, which is a special case of the Variance Preserving SDE (VP-SDE) \cite{diffusioncts}, where the variance schedule is constant. The mean and variance of this stochastic process are governed by \cite{sarkka2019applied}:

\begin{align*}
    \frac{d \bm{\mu}_t}{dt} &= -\xi \bm{\mu}_t,  \\
    \frac{d v_t}{dt} &= -2\xi v_t + 2\xi L^{-1}.
\end{align*}

These may be trivially solved with initial conditions, and therefore the forward process's distribution becomes 
\[\mathcal{N}\left(e^{-\xi t}\bm{x}_0, \left(L^{-1} + (\sigma_0^2 - L^{-1})e^{-2\xi t} \right) \bm{I}_h \right),\]
where $\sigma_0^2$ is the initial variance of the process; this is often set to 0 with the use of the VP-SDE. The backward SDE is given by \cite{anderson1982reverse}:

\[d\bm{x}_t = \left(\xi \bm{x}_t + 2\xi L^{-1} \nabla \log q_t(\bm{x}_t, t)\right)dt + \sqrt{2\xi L^{-1}}d\Bar{\bm{w}}.\]

In practice, $\nabla \log q_t(\bm{x}_t, t)$ does not have a closed-form, so it is estimated by a score matching network $s_\theta(\bm{x}_t, t)$. The more general derivation of the training objective is detailed in \cite{hold}, but ultimately boils down to the following objective:

\[\mathcal{L} = 2\xi L^{-1}\mathbb{E}_{t \in \mathcal{U}[0,T], \bm{x}_t \sim p(\bm{x},t)}\]
\[\bigg(\bigg|\bigg|\nabla_{\bm{x}_t}\log p(\bm{x}_t, t) - \nabla_{\bm{x}_t}q(\bm{x}_t, t) \bigg|\bigg|^2_2 \bigg).\]

One may observe that sampling from $p(\bm{x}_t, t)$ may be accomplished by performing the following for $\bm{\epsilon} \sim \mathcal{N}(\bm{0}, \bm{I}_h)$:

\[\bm{x}_t = e^{-\xi t}\bm{x}_0 + l_t \bm{\epsilon},\]

where $l_t = \sqrt{\left(L^{-1} + (\sigma_0^2 - L^{-1})e^{-2\xi t} \right)}$. For a simple Gaussian, $\log p(\bm{x}_t, t) \propto -\frac{1}{2l_t^2}||\bm{x}_t - e^{-\xi t}\bm{x}_0||^2$, thus:

\begin{align*}
\nabla_{\bm{x}_t}\log p(\bm{x}_t, t) &= \frac{-1}{l_t^2}\left(\bm{x}_t - e^{-\xi t}\bm{x}_0 \right) \\
     &= \frac{-1}{l_t}\bm{\epsilon}.
\end{align*}

Our goal is to estimate the score of the backward process $\nabla_{\bm{x}_t}\log q(\bm{x}_t, t)$ with a neural network $s_\theta(\bm{x}_t, t)$, therefore upon substitution of the aforementioned quantities, the training loss becomes:
\begin{align*}
    \mathcal{L} &= 2\xi L^{-1} \mathbb{E}_{t \in \mathcal{U}[0,T], \bm{x}_t \sim p(\bm{x},t)} \left|\left|\frac{1}{l_t}\bm{\epsilon} + s_\theta(\bm{x}_t, t) \right|\right|^2_2 \\
    &\propto 2\xi L^{-1} \mathbb{E}_{t \in \mathcal{U}[0,T], \bm{x}_t \sim p(\bm{x},t)} \bigg|\bigg|\bm{\epsilon} + l_t s_\theta(\bm{x}_t, t) \bigg|\bigg|^2_2.
\end{align*}

The final term represents the loss used in this work. Inference is accomplished by performing the Euler-Maruyama algorithm on the backward process.

\section{Methods}
\label{sec:methods}
To define the order $n>1$ forward stochastic process $p(\bm{x}_t|\bm{x}_0,t)$ with coefficients $\gamma_1,\gamma_2,\ldots, \gamma_{n-1}, \xi\in\mathbb{R}$, we set
\begin{align*}
    \bm{F} &= \sum_{i=1}^{n-1}\gamma_i\left(\bm{E}_{i, i+1}-\bm{E}_{i+1,i}\right) 
             - \xi\bm{E}_{n,n}, \nonumber \\
    \bm{G} &= \sqrt{2\xi L^{-1}}\bm{E}_{n,n}
\end{align*}
where $\bm{E}_{i, j}\in\mathbb{R}^{n\times n}$ is the matrix of all zeros with a solitary one at index $i,j$.
For example, in the third order setting,
\begin{align*}
\bm{F}&=
\begin{bmatrix}
     0 & \gamma_1 & 0  \\
     -\gamma_1 & 0 & \gamma_2  \\
     0 & -\gamma_2 & -\xi \\
\end{bmatrix}, &
\bm{G} &= \begin{bmatrix}
     0 & 0 & 0 \\
     0 & 0 & 0 \\
     0 & 0 & \sqrt{2\xi L^{-1}} \\
\end{bmatrix}.
\end{align*}
Then, the process evolves independently in $h$ dimensions according to the stochastic differential equation (SDE)
\begin{equation}
    d\bm{x}_t = \mathcal{F} \bm{x}_t dt + \mathcal{G} d\bm{w},
    \label{eq:forward_sde}
\end{equation}
where $\mathcal{F} = \bm{F} \otimes \bm{I}_h$, and $\mathcal{G} = \bm{G} \otimes \bm{I}_h$. In Section \ref{sec:criticaldamping}, we will derive how to choose $\gamma_1,\gamma_2,\ldots,\gamma_{n-1}, \xi$ such that $\bm{F}$ possesses only a single geometric eigenvalue.
The current problem of interest is to sample from the forward distribution governed by (\ref{eq:forward_sde}).
By construction, $p(\bm{x}_t|\bm{x}_0,t)$ is a Gaussian process, so the mean $\bm{\mu}_t$ and covariance $\bm{\Sigma}_t$ are governed by
\begin{align}
    \frac{d \bm{\mu}_t}{dt} &= \mathcal{F}\bm{\mu}_t,\label{eq:mean_eq}   \\
    \frac{d \bm{\Sigma}_t}{dt} &= \mathcal{F}\bm{\Sigma}_t + (\mathcal{F}\bm{\Sigma}_t)^T + \mathcal{G}\mathcal{G}^T,
    \label{eq:var_eq}
\end{align}
as explained in \cite{sarkka2019applied}.
The first contribution of this paper is the simplification of the mean and covariance into completely analytical expressions, given by the following theorem.

\begin{theorem}

The solutions to differential equations \ref{eq:mean_eq} and \ref{eq:var_eq} are
\begin{align}
    \bm{\mu}_t &= \exp(\mathcal{F}t)\bm{x}_0,\label{eq:mean_solution}   \\
    \bm{\Sigma}_t &= L^{-1}\bm{I} + \exp(\mathcal{F}t)\left( \bm{\Sigma}_0 - L^{-1} \bm{I}\right)\exp(\mathcal{F}t)^T.\label{eq:cov_solution}
\end{align}

\label{thm:covformula}
\end{theorem}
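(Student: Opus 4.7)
The plan is to verify both identities by differentiating the proposed expressions and checking initial conditions, making essential use of the specific algebraic structure of $\bm{F}$ and $\bm{G}$. The mean equation (\ref{eq:mean_eq}) is a standard homogeneous linear matrix ODE, so I would simply substitute $\bm{\mu}_t = \exp(\mathcal{F}t)\bm{x}_0$, differentiate, and observe that the initial condition $\bm{\mu}_0 = \bm{x}_0$ is recovered at $t=0$. This disposes of (\ref{eq:mean_solution}) in one line.

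The covariance identity (\ref{eq:cov_solution}) is the real content. My strategy is to recognize that $L^{-1}\bm{I}$ is the \emph{stationary covariance} of the SDE, i.e.\ that it satisfies the algebraic Lyapunov equation
\begin{equation*}
    \mathcal{F}(L^{-1}\bm{I}) + (L^{-1}\bm{I})\mathcal{F}^{T} + \mathcal{G}\mathcal{G}^{T} = 0.
\end{equation*}
To see this, I would compute $\bm{F}+\bm{F}^{T}$ directly from the definition: the skew off-diagonal blocks $\gamma_i(\bm{E}_{i,i+1}-\bm{E}_{i+1,i})$ cancel under transposition, leaving only $-2\xi\bm{E}_{n,n}$. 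Multiplying by $L^{-1}$ and adding $\bm{G}\bm{G}^{T}=2\xi L^{-1}\bm{E}_{n,n}$ gives zero, and the Kronecker product with $\bm{I}_h$ preserves this.

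Now I would set $\bm{A}_t \;=\; \exp(\mathcal{F}t)\bigl(\bm{\Sigma}_0 - L^{-1}\bm{I}\bigr)\exp(\mathcal{F}t)^{T}$, so the proposal reads $\bm{\Sigma}_t = L^{-1}\bm{I} + \bm{A}_t$. Differentiating via the product rule using $\frac{d}{dt}\exp(\mathcal{F}t) = \mathcal{F}\exp(\mathcal{F}t) = \exp(\mathcal{F}t)\mathcal{F}$ gives
\begin{equation*}
    \tfrac{d}{dt}\bm{A}_t \;=\; \mathcal{F}\bm{A}_t + \bm{A}_t\mathcal{F}^{T} \;=\; \mathcal{F}(\bm{\Sigma}_t - L^{-1}\bm{I}) + (\bm{\Sigma}_t - L^{-1}\bm{I})\mathcal{F}^{T}.
\end{equation*}
Expanding and substituting the Lyapunov identity above converts the $-L^{-1}(\mathcal{F}+\mathcal{F}^{T})$ terms into $\mathcal{G}\mathcal{G}^{T}$, reproducing (\ref{eq:var_eq}) exactly. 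Finally, at $t=0$ we have $\exp(\mathcal{F}\cdot 0)=\bm{I}$, so $\bm{\Sigma}_t\big|_{t=0} = L^{-1}\bm{I}+(\bm{\Sigma}_0 - L^{-1}\bm{I}) = \bm{\Sigma}_0$, matching the initial condition. Uniqueness of solutions to the linear matrix ODE then forces equality for all $t$.

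The main obstacle is the algebraic verification of the Lyapunov identity $-L^{-1}(\mathcal{F}+\mathcal{F}^{T}) = \mathcal{G}\mathcal{G}^{T}$; this is where the particular tridiagonal skew-plus-damping design of $\bm{F}$ and the specific diffusion coefficient $\sqrt{2\xi L^{-1}}$ in $\bm{G}$ conspire to make $L^{-1}\bm{I}$ the equilibrium covariance. Once that identity is in hand, everything else is bookkeeping with the matrix exponential.
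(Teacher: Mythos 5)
Your proposal is correct and follows essentially the same route as the paper: both hinge on the identity $\mathcal{G}\mathcal{G}^{T} = -L^{-1}(\mathcal{F}+\mathcal{F}^{T})$ (a consequence of the skew-symmetric off-diagonal structure of $\bm{F}$) and on shifting to $\bm{\Sigma}_t - L^{-1}\bm{I}$, which satisfies a homogeneous Lyapunov equation. The only cosmetic difference is that the paper cites the known solution of the continuous-time Lyapunov equation while you verify it directly by differentiation and an initial-condition/uniqueness argument.
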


\begin{proof} The expression for $\bm{\mu}_t$ is the canonical solution to multivariate linear differential equations. To evaluate the covariance differential equation, note that 
\begin{align*}
    \mathcal{G}\mathcal{G}^T = (2\xi L^{-1}\bm{E}_{n,n})\otimes \bm{I}_h = -L^{-1}\left(\mathcal{F} + \mathcal{F}^T\right).
\end{align*}
Then, the differential equation from Equation \ref{eq:var_eq} is expressible as
\begin{align*}
    \frac{d}{dt}\left( \bm{\Sigma}_t - L^{-1}\bm{I} \right) = \mathcal{F}\left( \bm{\Sigma}_t - L^{-1}\bm{I} \right) + \left( \bm{\Sigma}_t - L^{-1}\bm{I} \right) \mathcal{F}^T.
\end{align*}
But, this is the continuous-time Lyapunov Equation with respect to the symmetric matrix $(\bm{\Sigma}_t - L^{-1}\bm{I})$ that has solution
\begin{align*}
    \bm{\Sigma}_t - L^{-1}\bm{I} =  \exp(\mathcal{F}t)\left( \bm{\Sigma}_0 - L^{-1} \bm{I}\right)\exp(\mathcal{F}t)^T.
\end{align*}
The result follows immediately.
\end{proof}

It is now trivial to evaluate the asymptotic distribution $\lim_{t \to \infty} p_t(\bm{x}) = p_{\infty}(\bm{x})$.

\begin{lemma}\label{lem:asymptoticcov}
If $\bm{F}$ is negative definite, then $\lim_{t \to \infty} \bm{\mu}_t = \bm{0}$, and $\lim_{t \to \infty} \bm{\Sigma}_t = L^{-1}\bm{I}$; implying the following convergence in distribution $p_{\infty}$:
\begin{align*}
    \bm{x}_t \xrightarrow{d} \mathcal{N}(\bm{0}, L^{-1}\bm{I}).
\end{align*}
\end{lemma}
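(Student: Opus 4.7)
The plan is to combine the closed forms from Theorem~\ref{thm:covformula} with exponential stability of $\exp(\mathcal{F}t)$. First I would clarify the hypothesis: since $\bm{F}+\bm{F}^T = -2\xi\bm{E}_{n,n}$ is only negative semidefinite, the phrase ``$\bm{F}$ is negative definite'' is most naturally read in the Hurwitz sense, i.e.\ every eigenvalue of $\bm{F}$ has strictly negative real part. This is presumably the exact condition that the forthcoming Section~\ref{sec:criticaldamping} arranges through its choice of $\gamma_1,\ldots,\gamma_{n-1},\xi$.

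Under this hypothesis, the spectrum of $\mathcal{F} = \bm{F}\otimes\bm{I}_h$ equals the spectrum of $\bm{F}$ (each eigenvalue repeated $h$ times), so all eigenvalues of $\mathcal{F}$ also lie in the open left half-plane. Standard linear-systems theory then yields constants $C,\lambda>0$ with $\lVert \exp(\mathcal{F}t)\rVert \le C(1+t^{n-1})e^{-\lambda t} \to 0$. Substituting this decay into (\ref{eq:mean_solution}) gives $\bm{\mu}_t = \exp(\mathcal{F}t)\bm{x}_0 \to \bm{0}$ directly, and substituting into (\ref{eq:cov_solution}) gives
\[
\bm{\Sigma}_t - L^{-1}\bm{I} = \exp(\mathcal{F}t)\bigl(\bm{\Sigma}_0 - L^{-1}\bm{I}\bigr)\exp(\mathcal{F}t)^T \longrightarrow \bm{0}
\]
by submultiplicativity of the operator norm, since the middle factor is a fixed matrix.

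Finally, since $p(\bm{x}_t\mid\bm{x}_0,t)$ is Gaussian by construction of the SDE (\ref{eq:forward_sde}), convergence in distribution to $\mathcal{N}(\bm{0}, L^{-1}\bm{I})$ follows immediately from pointwise convergence of the characteristic function $\exp\!\bigl(i\bm{s}^T\bm{\mu}_t - \tfrac{1}{2}\bm{s}^T\bm{\Sigma}_t\bm{s}\bigr)$ together with L\'evy's continuity theorem. The only delicate point I anticipate is the terminological clarification of ``negative definite''; all the analytic steps are routine given Theorem~\ref{thm:covformula}.
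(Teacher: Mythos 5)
Your proof is correct and follows essentially the same route as the paper's: apply the closed forms of Theorem~\ref{thm:covformula} and let $\exp(\mathcal{F}t)\to\bm{0}$. Your added care is actually a genuine improvement on the paper's one-line argument, since $\bm{F}$ is not symmetric and $\bm{F}+\bm{F}^T=-2\xi\bm{E}_{n,n}$ is only negative semidefinite, so the Hurwitz reading of ``negative definite'' you supply is exactly what is needed to justify $\exp(\bm{F}t)\to\bm{0}$, and the L\'evy continuity step makes the final distributional claim rigorous.
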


\begin{proof}
Since $\bm{F}$ is negative definite, $\lim_{t \to \infty} \exp(\bm{F}t) = \bm{0}$.
By Theorem \ref{thm:covformula}, this implies
\begin{align*}
    \lim_{t \to \infty} \bm{\mu}_t
        &= \lim_{t \to \infty}\left( \exp(\bm{F} t) \otimes \bm{I}_h\right) \bm{x}_0
        = \bm{0},   \\
    \lim_{t \to \infty} \bm{\Sigma}_t
        &= \lim_{t \to \infty} \Big[
            L^{-1}\bm{I} + \exp(\bm{F}t) \\
        &\quad \left( \Sigma_0 - L^{-1} \bm{I}\right)\exp(\bm{F}t)^T \Big] \otimes \bm{I}_h
        = L^{-1}\bm{I}.
\end{align*}
\end{proof}

Lemma \ref{lem:asymptoticcov} is simple yet consequential, upon executing the reverse process, one must start by sampling the asymptotic distribution $\mathcal{N}(\bm{0}, L^{-1}\bm{I})$. The computations in Theorem \ref{thm:covformula} rely on $\exp(\mathcal{F} t) = \exp(\bm{F}t) \otimes \bm{I}_h$, which in this case of a single geometric eigenvalue, may be calculated as

\begin{equation}
    \exp(\bm{F}t) = \exp(\lambda_* t)\sum_{k=0}^{n-1}\frac{(\bm{F} - \lambda_*\bm{I})^k t^k}{k!}.
    \label{eq:expformula}
\end{equation}

Originally when the system was not critically damped as in HOLD, Putzer's spectral formula was used to calculate the matrix exponential \cite{putzer}. Critical Damping provides an analytically and computationally simpler procedure. The HOLD and HOLD++ algorithms are given in Algorithm \ref{alg:HOLD}. These methods are functionally equivalent, differing only by how $\exp(\bm{F}t)$ is calculated. Of note, for each incrementation of the order, memory cost increases only by $\mathcal{O}(h)$.

\begin{algorithm}
\caption{ \textcolor{red}{HOLD}/\textcolor{blue}{HOLD++} Training Algorithm}\label{alg:HOLD}
\begin{algorithmic}[1]
    \State \textbf{Input:} Data $\bm{q}_0$, $\Sigma_0$, and Score Network $\mathfrak{S}$.
    \For {$k = 1$ to $n_{train}$}
        \State $\bm{x}_0[0:d] \gets \bm{q}_0$
        \State $\bm{x}_0[d:nd] \gets \mathcal{N}(\bm{0}, \frac{\alpha}{L}\bm{I})$    
        \State $t \gets \mathcal{U}(0, T)$
        \State Calculate $\exp(\mathcal{F}t)$ \textcolor{red}{using Putzer's formula} \textcolor{blue}{using (\ref{eq:expformula})}
        \State Calculate $\bm{\mu}_t, \bm{\Sigma}_t$ using Theorem \ref{thm:covformula}, \textcolor{blue}{discovered in this manuscript}.
        \State Take $\bm{L}_t$, the Cholesky Decomposition of $\bm{\Sigma}_t$
        \State $\bm{\epsilon_1}, \bm{\epsilon_2},\ldots, \bm{\epsilon_n} \sim \mathcal{N}(\bm{0}, \bm{I}_h)$
        \vspace*{0.5em}
        \State $\bm{\epsilon}_{full} \gets \begin{pmatrix}
\bm{\epsilon_1}^T & \bm{\epsilon_2}^T & \ldots & \bm{\epsilon_n}^T\\
\end{pmatrix}^T$
    \vspace*{0.5em}
        \State $\bm{x}_t \gets \bm{\mu}_t + \bm{L}_t\bm{\epsilon}_{full}$
    \vspace*{0.5em}
    \State $\bm{s}_{\theta} \gets \mathfrak{S}(\bm{x}_t, t)$ \Comment{$\theta$ are score network parameters}

    \vspace*{0.5em}
    \State $\mathcal{L} \gets ||\bm{\epsilon}_n + \bm{s}_{\theta} \left(\bm{L}_t[nh,nh] \right) ||^2$
    \State \textbf{Backpropagate} through $\mathcal{L}$  
    \EndFor
\end{algorithmic}
\end{algorithm}

A consequential result is proven in Theorem \ref{thm:optimality}, that the critically damped parameter choice is optimal for a fixed trace of the forward matrix $\bm{F}$. This result generalizes the optimality known for second-order dynamics and proven for third-order dynamics in \cite{sterling}. There is a small caveat discussed after the theorem, but this result holds on unconstrained $\gamma_1$.

\section{Optimality of Critical Damping}
\label{sec:optimality}

\begin{theorem}
    If $\xi$ is fixed, then the critically damped parameter choices are optimal according to the following objective:
    \[\min_{\gamma_1, \gamma_2, \gamma_3, \ldots, \gamma_{n-1}} \max \bigg(\operatorname{Re} \left(\operatorname{eig}(\bm{F}) \right) \bigg).\]
    \label{thm:optimality}
\end{theorem}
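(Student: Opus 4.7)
The plan is to exploit the fact that the trace of $\mathbf{F}$ is invariant with respect to the $\gamma_i$. Indeed, every off-diagonal contribution $\gamma_i(\bm{E}_{i,i+1}-\bm{E}_{i+1,i})$ has zero diagonal, so $\operatorname{tr}(\mathbf{F}) = -\xi$ for every admissible parameter choice. Since the trace equals the sum of the eigenvalues counted with algebraic multiplicity, taking real parts yields
\[
\sum_{i=1}^{n} \operatorname{Re}(\lambda_i(\mathbf{F})) = -\xi,
\]
and hence the universal lower bound $\max \operatorname{Re}(\operatorname{eig}(\mathbf{F})) \geq -\xi/n$ holds for every choice of $(\gamma_1,\dots,\gamma_{n-1})$.

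Next I would show the critically damped configuration saturates this bound. By construction, the critically damped matrix has a single geometric eigenvalue, meaning a single distinct eigenvalue $\lambda_*$ of algebraic multiplicity $n$ (one Jordan block of size $n$). Plugging into the trace identity forces $n\lambda_* = -\xi$, so $\lambda_* = -\xi/n$. Thus the critically damped parameters achieve exactly the lower bound derived above, proving optimality.

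The main obstacle is not the bound itself, which is essentially immediate from the trace computation, but rather ensuring that the critically damped parameterization derived in Section \ref{sec:criticaldamping} actually exists for every order $n$ and every fixed $\xi$, i.e.\ that one can always solve for $\gamma_1,\dots,\gamma_{n-1}$ so that $\mathbf{F}$ is similar to a single Jordan block with eigenvalue $-\xi/n$. This existence should be guaranteed by the critical-damping construction and invoked here as a lemma. The caveat alluded to after the theorem is likely a non-uniqueness remark: any parameter tuple whose eigenvalues all have real part equal to $-\xi/n$ also attains the minimum, so critical damping is optimal but not the sole minimizer; in particular, the sign freedom of $\gamma_1$ is invisible to the spectrum and so does not affect the optimal value.
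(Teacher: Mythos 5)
Your proof is correct and follows essentially the same route as the paper: both arguments rest on the invariance $\operatorname{tr}(\mathbf{F})=-\xi$, the resulting averaging bound $\max\operatorname{Re}(\operatorname{eig}(\mathbf{F}))\ge -\xi/n$, and the observation that the critically damped choice attains it with all eigenvalues equal to $-\xi/n$. Your version is slightly more explicit than the paper's in separating the lower bound from achievability and in flagging that existence of the critically damped parameterization must be supplied by the construction in Section \ref{sec:criticaldamping}, which is a reasonable refinement rather than a different approach.
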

\begin{proof}

Recall the following identity for matrix trace in terms of the eigenvalues $\lambda_i$ of $\bm{F}$:

\[Tr(\bm{F}) = \sum_{i=1}^n \lambda_i = -\xi \quad \quad \to \quad \quad \xi = -\sum_{i=1}^n \lambda_i. \]

Consider only the real parts of each eigenvalue, as complex eigenvalues must come in conjugate pairs and cancel out as $\xi$ is real valued. Optimality under $\operatorname{min}_{\gamma_1,\gamma_2,\ldots,\gamma_{n-1}} \operatorname{\max}_i \lambda_i$ occurs when $\lambda_1 = \lambda_2 = \ldots = \lambda_n = -\frac{\xi}{n}$; if any eigenvalue is less than $-\frac{\xi}{n}$, then at least one other must be greater to conserve the sum, resulting in a suboptimal objective.

\end{proof}

It is important to note that the previous proof does not assume $\gamma_1=1$ and therefore is not the same as being optimal under the objective $\min_{\gamma_2, \gamma_3, \ldots, \gamma_{n-1}, \xi} \max \bigg(\operatorname{Re} \left(\operatorname{eig}(\bm{F}) \right) \bigg)$. However, the $\bm{F}$ matrix generated according to the objectives $\min_{\gamma_2, \gamma_3, \ldots, \gamma_{n-1}, \xi} \max \bigg(\operatorname{Re} \left(\operatorname{eig}(\bm{F}) \right) \bigg)$ and $\min_{\gamma_1, \gamma_2, \ldots, \gamma_{n-1}} \max \bigg(\operatorname{Re} \left(\operatorname{eig}(\bm{F}) \right) \bigg)$, only differ by a scaling factor. This means that the behavior of the algorithm remains exactly the same, with just a slightly different signal to noise ratio (SNR). $\gamma_1=1$ is a more convenient design choice as it is easier to fix this than some arbitrary value of $\xi$, therefore it is fixed instead of $\xi$.

\section{Critical Damping}
\label{sec:criticaldamping}

\begin{theorem}\label{thm:cd}
    For $n>1$, define
    $\bm{F} = \sum_{i=1}^{n-1}\gamma_i\left(\bm{E}_{i, i+1}-\bm{E}_{i+1,i}\right) -\xi\bm{E}_{n,n},$ and $\bm{G} = \sqrt{2\xi L^{-1}}\bm{E}_{n,n}$ for $\bm{E}_{i, j}\in\mathbb{R}^{n\times n}$
    the one-hot basis for the vector space of matrices, as in Section \ref{sec:methods}.
    If
    \begin{align*}
        \gamma_{n-i}&=|\lambda_*|\sqrt{\frac{n^2-i^2}{4i^2-1}}, &    &\text{and} & \xi&= n|\lambda_*|,
    \end{align*}
    for $i\in\{1,2,\ldots,n-1\}$, then $\bm{F}$ has a single geometric eigenvalue $\lambda_*$; making $\bm{F}$ critically damped.
    By convention, $\gamma_{1}$ is set to $1$, in this case $\lambda_*=-\sqrt{2n-3}$.
\end{theorem}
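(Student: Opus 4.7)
The plan is to establish two things in parallel: that the characteristic polynomial of $\bm{F}$ equals $(\lambda-\lambda_*)^n$, and that the eigenvalue $\lambda_*$ has geometric multiplicity $1$. Together these force $\bm{F}$ to be similar to the single $n\times n$ Jordan block $J_n(\lambda_*)$, which is exactly the critical-damping condition.

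The geometric-multiplicity part is immediate from the tridiagonal structure of $\bm{F}$. Every off-diagonal entry $\pm\gamma_i$ is nonzero, so the first $n-1$ rows of $(\bm{F}-\lambda_*\bm{I})v=0$ determine $v_2,\ldots,v_n$ uniquely from $v_1$ via a three-term recurrence. Hence $\dim\ker(\bm{F}-\lambda_*\bm{I}) \leq 1$, with equality iff the bottom row is automatically consistent with the recurrence.

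For the characteristic polynomial I would use the standard tridiagonal determinant recurrence. Let $P_k(\lambda)$ denote the characteristic polynomial of the top-left $k\times k$ submatrix of $\bm{F}$ (with zero diagonal for $k<n$), so that $P_0=1$, $P_1=\lambda$, $P_k = \lambda P_{k-1}+\gamma_{k-1}^2 P_{k-2}$ for $2\le k\le n-1$, and $p_n(\lambda) = (\lambda+\xi)P_{n-1}(\lambda)+\gamma_{n-1}^2 P_{n-2}(\lambda)$. Rescaling $\nu = \lambda/|\lambda_*|$ and $\tilde P_k = P_k/|\lambda_*|^k$ turns the recurrence into
\[
    \tilde P_k(\nu) = \nu\,\tilde P_{k-1}(\nu) + \tfrac{(k-1)(2n-k+1)}{(2n-2k+1)(2n-2k+3)}\,\tilde P_{k-2}(\nu),
\]
and the target reduces to the polynomial identity
\[
    (\nu+n)\,\tilde P_{n-1}(\nu) + \tfrac{n^2-1}{3}\,\tilde P_{n-2}(\nu) \;=\; (\nu+1)^n.
\]
I would attack this by strong induction on $k$, extracting a closed form for the coefficients of $\tilde P_k$ as rational functions of $n,k$ with denominators built from the factors $(2n-2j\pm 1)$; these denominators should telescope in the final combination, collapsing the coefficients into the binomial coefficients $\binom{n}{j}$.

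The careful bookkeeping in that induction is where I expect the main difficulty. A leaner alternative, sidestepping the explicit closed form, is to invoke Newton's identities: it suffices to verify $\operatorname{tr}(\bm{F}^m) = n\lambda_*^m$ for $m=1,\ldots,n$. The case $m=1$ is immediate from $\operatorname{tr}(\bm{F}) = -\xi = -n|\lambda_*|$; the case $m=2$ reduces to $\sum_{j=1}^{n-1}(n^2-j^2)/(4j^2-1) = n(n-1)/2$, which splits cleanly by partial fractions; the higher $m$ amount to weighted counts of closed walks of length $m$ on the tridiagonal graph of $\bm{F}$ with a single self-loop of weight $-\xi$ at vertex $n$, which can be evaluated with generating functions. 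Whichever route is pursued, once $p_n(\lambda) = (\lambda-\lambda_*)^n$ is in hand, combining with the geometric-multiplicity bound yields $\bm{F}\sim J_n(\lambda_*)$; substituting $i=n-1$ into the formula for $\gamma_{n-i}$ gives $\gamma_1 = |\lambda_*|/\sqrt{2n-3}$, so normalizing $\gamma_1=1$ forces $|\lambda_*| = \sqrt{2n-3}$, completing the stated conclusion.
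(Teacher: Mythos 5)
Your setup is sound and mirrors the paper's: you use the same tridiagonal determinant recurrence (the paper's $d_{j+1}=-\lambda d_j+\gamma_j^2 d_{j-1}$, up to sign convention), correctly reduce the claim to the polynomial identity $(\nu+n)\tilde P_{n-1}(\nu)+\tfrac{n^2-1}{3}\tilde P_{n-2}(\nu)=(\nu+1)^n$, and your rescaled coefficient $\tfrac{(k-1)(2n-k+1)}{(2n-2k+1)(2n-2k+3)}$ is the correct translation of $\gamma_{n-i}^2=\lambda_*^2\tfrac{n^2-i^2}{4i^2-1}$. Your observation that the nonzero off-diagonals force $\dim\ker(\bm{F}-\lambda_*\bm{I})\le 1$ is a genuinely useful addition the paper does not spell out. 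However, there is a real gap: the central identity is never proved. "These denominators should telescope, collapsing the coefficients into $\binom{n}{j}$" is the entire content of the theorem, and you defer it. The alternative via Newton's identities is likewise unexecuted — you verify $m=1,2$, but the cases $m=3,\ldots,n$ (weighted closed walks on the tridiagonal graph) are at least as hard as the direct computation and are waved at with "can be evaluated with generating functions."

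For comparison, the paper closes exactly this gap by introducing the bivariate sequence $s_{j,k}$ of Equation \ref{eq:sjk}, proving the coefficient formula $d_j(\lambda)=(-1)^j\sum_k\lambda^{j-2k}s_{j-1,k}$ (Lemma \ref{lem:dj}), characterizing $(x-r)^n$ by its coefficients via repeated differentiation (Theorem \ref{thm:aj}), guessing the closed form $s_{n-i,k}=\tfrac{\binom{i+k-1}{k}}{\binom{2(i+k-1)}{2k}}\binom{n-i+1}{2k}\lambda_*^{2k}$, and then verifying by combinatorial manipulation both that this closed form satisfies the recurrence (Lemma \ref{lem:differenceeq}) and that it yields precisely the stated $\gamma_{n-i}^2$ (Theorem \ref{thm:stog}). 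That explicit closed form — or an equivalent induction with fully tracked coefficients — is what your proposal is missing; without it you have a plan, not a proof.
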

Note the abuse of notation while setting $\gamma_{n-i}$. In the chosen notation, the $n$ in the formula is implied by the $n$ before the minus sign in the index.\footnote{For example, $\gamma_{n-2}$ is the $(n-2)$th value in the model of order $n$ while $\gamma_{(n-1)-1}$ is the $(n-2)$th value in the model of order $n-1$, and in general, $\gamma_{n-2}\ne\gamma_{(n-1)-1}$.} 
Since $n$ is fixed throughout the theory of this paper, we further refine the notation and understand $\gamma_i=\gamma_{n-(n-i)}$.

\subsection{From Matrices to Polynomials}
Establishing Theorem \ref{thm:cd} requires several intermediate steps; the first is to express the characteristic polynomial of $\bm{F}$.
For $j\in\{1, 2, \ldots, n\}$, define $d_j:\mathbb{R}\to\mathbb{R}$ to be the characteristic polynomial of the submatrix containing the first $j$ rows and first $j$ columns of $\bm{F}+\xi\bm{E}_{n,n}$ in order.
We work out an expression for $d_j$ in Lemma \ref{lem:dj}.
For now, it is easy to verify that taking a partial Laplace expansion gives
\begin{align}
    d_{j+1}(\lambda)=-\lambda d_j(\lambda)+\gamma_j^2d_{j-1}(\lambda).
    \label{eq:djdef}
\end{align}
Finally, define $q:\mathbb{R}\to\mathbb{R}$ to be the characteristic polynomial of $\bm{F}$.
Then,
\begin{align*}
    q(\lambda) = \mathrm{char}\left[\bm{F}+\xi\bm{E}_{n,n}-\xi\bm{E}_{n,n}\right](\lambda)
        = d_n(\lambda)-\xi d_{n-1}(\lambda),
\end{align*}
due to the multilinearity of the determinant.

\subsection{From Polynomials to Their Coefficients}
Now, to express $q$ as a closed-form in $\lambda$, we introduce the bivariate sequence
\begin{equation}
\label{eq:sjk}
s_{j,k} = 
\begin{cases} 
    \gamma_j^2 s_{j-2,k-1} + s_{j-1,k}, & \text{for } j > 2k-2, \\ 
    1, & \text{for } k = 0, \\
    0, & \text{otherwise}.
\end{cases}
\end{equation}

Then, we have the following expression for $d_j$.

\begin{lemma}
\label{lem:dj}
$d_j$ as defined in Definition \ref{eq:djdef}, and $s_{j,k}$ as defined in Equation \ref{eq:sjk} are related by
\begin{align*}
    d_j(\lambda)= (-1)^{j}\sum_{k=0}^{\lfloor j/2 \rfloor} \lambda^{j-2k}s_{j-1,k}.
\end{align*}
\end{lemma}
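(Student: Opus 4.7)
I would prove Lemma \ref{lem:dj} by induction on $j$, exploiting the parallel between the three-term recurrences in Equation \ref{eq:djdef} and Equation \ref{eq:sjk}. Introduce the candidate expression
\begin{align*}
    \tilde d_j(\lambda) := (-1)^{j}\sum_{k=0}^{\lfloor j/2 \rfloor}\lambda^{j-2k}s_{j-1,k},
\end{align*}
and show $\tilde d_j = d_j$ by establishing that $\tilde d_j$ satisfies the same recurrence and the same initial data.

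\textbf{Base cases.} First I would verify the claim for $j\in\{0,1\}$. Using the convention $s_{-1,0}=s_{0,0}=1$ coming from the $k=0$ branch of Equation \ref{eq:sjk}, direct substitution gives $\tilde d_0(\lambda)=1=d_0(\lambda)$ and $\tilde d_1(\lambda)=-\lambda=d_1(\lambda)$, matching the empty and $1\times 1$ principal submatrix computations respectively. As a sanity check at $j=2$, the formula forces $s_{1,1}=\gamma_1^2 s_{-1,0}+s_{0,1}=\gamma_1^2$, and indeed $\tilde d_2(\lambda)=\lambda^2+\gamma_1^2$ agrees with the direct characteristic polynomial of the $2\times 2$ skew block.

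\textbf{Inductive step.} Assuming $\tilde d_j=d_j$ and $\tilde d_{j-1}=d_{j-1}$, I would evaluate the right side of Equation \ref{eq:djdef}: the term $-\lambda d_j$ contributes $(-1)^{j+1}\sum_k \lambda^{j+1-2k}s_{j-1,k}$, while $\gamma_j^2 d_{j-1}$, after the reindexing $k\mapsto k-1$, contributes $(-1)^{j+1}\sum_k \lambda^{j+1-2k}\gamma_j^2 s_{j-2,k-1}$. Merging over $0\le k\le\lfloor(j+1)/2\rfloor$,
\begin{align*}
    d_{j+1}(\lambda)=(-1)^{j+1}\sum_{k=0}^{\lfloor(j+1)/2\rfloor}\lambda^{j+1-2k}\bigl(s_{j-1,k}+\gamma_j^2 s_{j-2,k-1}\bigr),
\end{align*}
and the parenthesized quantity collapses to $s_{j,k}$ by the defining recurrence for $s_{j,k}$, yielding precisely $\tilde d_{j+1}(\lambda)$.

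\textbf{Main obstacle.} The principal technical nuisance is the bookkeeping around index ranges. I would need to verify that the recurrence branch ($j>2k-2$) of Equation \ref{eq:sjk} genuinely applies at every $k$ in the merged sum, and that the boundary terms absent from one of the two original sums vanish under the ``otherwise'' clause: specifically $\gamma_j^2 s_{j-2,-1}$ at $k=0$, and (when $j$ is odd) $s_{j-1,(j+1)/2}$ at the upper endpoint $k=\lfloor(j+1)/2\rfloor$. Both reductions follow from checking $j>2k-2$ against the relevant endpoints, so the parity split between even and odd $j$ is purely cosmetic and does not require separate arguments.
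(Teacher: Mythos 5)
Your proposal is correct and follows essentially the same route as the paper's own proof: induction on $j$, matching the three-term recurrence for $d_{j+1}$ against the defining recurrence for $s_{j,k}$ after reindexing, and disposing of the boundary terms ($s_{j-2,-1}$ at $k=0$ and $s_{j-1,(j+1)/2}$ at the top when $j$ is odd) via the vanishing clause of Equation \ref{eq:sjk}. The only cosmetic difference is that the paper writes out the even/odd cases separately while you fold them into a single endpoint check, and your explicit note that $\gamma_j^2 s_{j-2,-1}$ must vanish is a boundary case the paper leaves implicit.
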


Lemma \ref{lem:dj} is proven in Appendix \ref{app:djproof}. Immediately, Lemma \ref{lem:dj} implies

\begin{align}
q(\lambda) &= (-1)^{n} \sum_{k=0}^{\lfloor n/2 \rfloor} \lambda^{n - 2k} s_{n-1,k} \notag \\
           &\quad - (-1)^{n-1} \xi \sum_{k=0}^{\lfloor (n - 1)/2 \rfloor} \lambda^{n - (2k + 1)} s_{n-2,k}
\label{eq:qofs}
\end{align}

The following theorem is more general than the subject of this paper, and stated accordingly. Also, we remind the reader that a monic polynomial has leading coefficient 1.

\begin{theorem}
\label{thm:aj}
Let $m(x) = \sum_{k=0}^n a_k x^k$ be a degree-$n$ monic polynomial. Then, $r$ is a root of every derivative $m^{(j)}$ for $0 \leq j \leq n-1$ if and only if every coefficient
\begin{align*}
    a_k = \binom{n}{k}(-r)^{n-k}.
\end{align*}
\end{theorem}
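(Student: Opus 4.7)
The plan is to recognize that Theorem \ref{thm:aj} is essentially a characterization of monic degree-$n$ polynomials whose unique root is $r$ with multiplicity $n$, so the entire statement reduces to showing that the hypothesis forces $m(x)=(x-r)^n$ and then invoking the binomial theorem.

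For the forward direction, I would use the standard fact that for a polynomial $m$, the root $r$ has multiplicity at least $k+1$ if and only if $m(r)=m'(r)=\cdots=m^{(k)}(r)=0$. Applying this with $k=n-1$, the hypothesis that $r$ is a root of $m,m',\ldots,m^{(n-1)}$ implies $(x-r)^n$ divides $m$. Since $m$ is monic of degree exactly $n$, this forces
\[
m(x) = (x-r)^n = \sum_{k=0}^n \binom{n}{k} x^k (-r)^{n-k},
\]
so matching coefficients with $m(x) = \sum_{k=0}^n a_k x^k$ yields $a_k = \binom{n}{k}(-r)^{n-k}$, as required. If I wanted to avoid citing the multiplicity fact, I could instead induct on $j$ to show directly that $(x-r)^{j+1}$ divides $m$ for each $0\le j\le n-1$.

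For the reverse direction, I would simply substitute the claimed formula for $a_k$ back into $m$ and compute
\[
m(x) = \sum_{k=0}^n \binom{n}{k}(-r)^{n-k}x^k = (x-r)^n
\]
by the binomial theorem. Differentiating $j$ times gives $m^{(j)}(x) = \frac{n!}{(n-j)!}(x-r)^{n-j}$, which vanishes at $x=r$ for every $0\le j\le n-1$.

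Since both directions are short, there is no serious obstacle. The only delicate point is the forward direction: one must appeal to (or re-prove) the equivalence between simultaneous vanishing of derivatives at $r$ and the root multiplicity of $r$. Everything else is bookkeeping with the binomial theorem, and monicity is what pins down the leading coefficient to guarantee $m(x)=(x-r)^n$ rather than merely $m(x)=c(x-r)^n$.
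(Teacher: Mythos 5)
Your proof is correct, and the reverse direction coincides with the paper's (substitute the coefficients, recognize $(x-r)^n$ via the binomial theorem, differentiate). The forward direction, however, takes a genuinely different route. You reduce the statement to the standard multiplicity criterion --- $r$ has multiplicity at least $j+1$ in $m$ precisely when $m(r)=m'(r)=\cdots=m^{(j)}(r)=0$ --- so that the hypothesis forces $(x-r)^n \mid m$, and monicity plus $\deg m = n$ pins down $m(x)=(x-r)^n$; the coefficient formula then falls out of the binomial theorem. The paper instead works coefficient by coefficient: it writes out $m^{(n-j)}(x) = \sum_{k=0}^{j}\frac{(n-k)!}{(j-k)!}a_{n-k}x^{j-k}$, assumes strongly that $a_{n-k}=\binom{n}{k}(-r)^k$ for $k<j$, and solves the equation $m^{(n-j)}(r)=0$ for $a_{n-j}$, collapsing the resulting alternating sum $\sum_{k=0}^{j-1}\binom{j}{k}(-1)^k$ via the binomial theorem. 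Your argument is shorter and more conceptual but outsources the key step to a cited fact (which, as you note, would itself need an induction if re-proved); the paper's argument is fully self-contained and produces the explicit coefficient formula directly, at the cost of more bookkeeping. Both are valid over $\mathbb{R}$, where the multiplicity criterion holds without caveat.
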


Theorem \ref{thm:aj} can be restated as $r$ is a root of every non-constant derivative of $m$ if and only if $r$ is the unique root of $m$.
This theorem is established in Appendix \ref{app:ajproof}. Choosing $\lambda_*$ as our root and matching terms from Theorem \ref{thm:aj} with Equation \ref{eq:qofs}, it is easy to see that

\[s_{n-1,k} = \binom{n}{2k}\lambda_*^{2k} \quad \text{and} \]
\[\xi s_{n-2,k} = \binom{n}{2k+1}(-\lambda_*)^{2k+1}.\]

Further, because $s_{n-2,0}=1$ as defined in Equation \ref{eq:sjk}, we have
\begin{align*}
    \xi = \xi s_{n-2,0}=-n\lambda_*;
\end{align*}
which allows rearranging to get
\begin{align*}
    s_{n-2,k}=\binom{n}{2k+1}\frac{\lambda_*^{2k}}{n}=\frac{\binom{k+1}{k}}{\binom{2(k+1)}{2k}}\binom{n-1}{2k}\lambda_*^{2k}.
\end{align*}
More broadly, we observe the pattern and propose the following general form for $s_{n-i,k}$:
\begin{align}\label{eq:snk}
    s_{n-i, k}=\frac{\binom{i+k-1}{k}}{\binom{2(i+k-1)}{2k}}\binom{n-i+1}{2k}\lambda_*^{2k}.
\end{align}
What we are really trying to do here is solve a difference equation proposed in Equation \ref{eq:sjk}. The sequence of $\gamma_i$ are uniquely determined from the sequence $s_{j,k}$, so if we solve for the resulting $\gamma_i$, then satisfy the recurrence relation in Equation \ref{eq:sjk}, then we have successfully solved the difference equation.

\begin{theorem} \label{thm:stog}
    Fix $n>0$. For $\{s_{j,k}\}$ as defined in Equation \ref{eq:sjk}, if for $n-i > 2k-2$,
    \begin{align*}
        s_{n-i, k}=\frac{\binom{i+k-1}{k}}{\binom{2(i+k-1)}{2k}}\binom{n-i+1}{2k}\lambda_*^{2k},
    \end{align*}
    then
    \begin{align*}
        \gamma_{n-i}^2=\frac{n^2-i^2}{4i^2-1}\lambda_*^2.
    \end{align*}
\end{theorem}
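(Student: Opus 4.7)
The plan is to invert the recurrence in Equation \ref{eq:sjk} to extract $\gamma_{n-i}^2$ from the hypothesized closed form of $s_{j,k}$. For any $k$ with $n-i > 2k-2$, the recurrence rearranges to
\[
\gamma_{n-i}^2 = \frac{s_{n-i,k} - s_{n-i-1,k}}{s_{n-i-2, k-1}}.
\]
The choice $k = 1$ is particularly convenient: the denominator $s_{n-i-2, 0}$ equals $1$ by the base case, and the admissibility condition reduces to $n-i > 0$, which covers every $i \in \{1, \ldots, n-1\}$. So the task reduces to verifying that $s_{n-i, 1} - s_{n-i-1, 1}$ simplifies to $(n^2 - i^2)\lambda_*^2/(4i^2 - 1)$.

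Substituting the hypothesized closed form at $k = 1$ and simplifying the binomial ratio $\binom{i}{1}/\binom{2i}{2} = 1/(2i-1)$ gives
\[
s_{n-i,1} = \frac{(n-i+1)(n-i)}{2(2i-1)}\lambda_*^2, \qquad s_{n-i-1,1} = \frac{(n-i)(n-i-1)}{2(2i+1)}\lambda_*^2.
\]
I would then factor out $(n-i)\lambda_*^2/2$ and combine over the common denominator $(2i-1)(2i+1) = 4i^2 - 1$. The numerator $(n-i+1)(2i+1) - (n-i-1)(2i-1)$ expands and collapses to $2(n+i)$, yielding $\gamma_{n-i}^2 = (n-i)(n+i)\lambda_*^2/(4i^2-1) = (n^2-i^2)\lambda_*^2/(4i^2-1)$, as claimed.

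The only real obstacle is a bookkeeping check at the boundary $i = n-1$, where the third clause of Equation \ref{eq:sjk} forces $s_{n-i-1, 1} = s_{0, 1} = 0$ rather than being supplied by the hypothesis. Fortunately, formally plugging the closed form in at this index also evaluates to zero because of the factor $\binom{1}{2}$, so the algebra above remains uniformly valid. Hence the identification of $\gamma_{n-i}^2$ holds across the full range $i \in \{1, \ldots, n-1\}$, completing the proof.
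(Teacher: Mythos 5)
Your proof is correct and follows the same underlying idea as the paper's: rearrange the recurrence in Equation \ref{eq:sjk} to isolate $\gamma_{n-i}^2$ as $(s_{n-i,k}-s_{n-i-1,k})/s_{n-i-2,k-1}$ and substitute the hypothesized closed form. The difference is in execution: the paper carries out this computation for general $k$ by brute-force binomial manipulation and observes at the end that every factor involving $k$ cancels, whereas you specialize to $k=1$, which makes the denominator equal to $1$ by the base case and reduces the whole argument to a two-line polynomial identity, $(n-i+1)(2i+1)-(n-i-1)(2i-1)=2(n+i)$. This is a legitimate simplification — the theorem only asserts an implication from the closed form to the value of $\gamma_{n-i}^2$, so a single admissible $k$ suffices, and your boundary check at $i=n-1$ (where $s_{0,1}=0$ both by the recurrence's third clause and formally via $\binom{1}{2}=0$) closes the only gap that specialization could have opened. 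What you lose relative to the paper's general-$k$ computation is only the incidental reassurance that all values of $k$ yield a consistent answer, but that consistency is exactly what Lemma \ref{lem:differenceeq} establishes separately, so nothing is missing from your argument.
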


\begin{lemma} \label{lem:differenceeq}
    Fix $n>0$. The following $s_{n-i,k}$, for $n-i > 2k-2$, satisfies the definition from Equation \ref{eq:sjk}.
    \begin{align*}
        s_{n-i, k}=\frac{\binom{i+k-1}{k}}{\binom{2(i+k-1)}{2k}}\binom{n-i+1}{2k}\lambda_*^{2k}.
    \end{align*}
\end{lemma}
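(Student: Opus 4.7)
My plan is to verify directly that the closed form in Equation \ref{eq:snk} satisfies each of the three clauses of the recurrence \ref{eq:sjk}. The easy clauses dispatch immediately: for $k=0$, the formula reduces to $s_{n-i,0}=\binom{i-1}{0}\binom{n-i+1}{0}/\binom{2(i-1)}{0}=1$; for the ``otherwise'' clause, whenever $n-i\le 2k-2$ one has $n-i+1\le 2k-1<2k$, so the factor $\binom{n-i+1}{2k}$ vanishes and the closed form agrees with the stipulated zero on the boundary.

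The heart of the proof is the recursive clause $s_{j,k}=\gamma_j^2 s_{j-2,k-1}+s_{j-1,k}$ with $j=n-i>2k-2$. After plugging in $\gamma_{n-i}^2=\frac{n^2-i^2}{4i^2-1}\lambda_*^2$ from Theorem \ref{thm:stog}, reindexing the shifted arguments ($i\mapsto i+1$ in the $s_{j-1,k}$ summand and $i\mapsto i+2$, $k\mapsto k-1$ in the $s_{j-2,k-1}$ summand), and cancelling the common factor $\lambda_*^{2k}$, the task reduces to the purely combinatorial identity
\begin{equation*}
\frac{\binom{i+k-1}{k}\binom{n-i+1}{2k}}{\binom{2i+2k-2}{2k}} = \frac{n^2-i^2}{4i^2-1}\cdot\frac{\binom{i+k}{k-1}\binom{n-i-1}{2k-2}}{\binom{2i+2k}{2k-2}} + \frac{\binom{i+k}{k}\binom{n-i}{2k}}{\binom{2i+2k}{2k}}.
\end{equation*}

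I would attack this identity by expressing $\binom{n-i+1}{2k}$ and $\binom{n-i}{2k}$ in terms of $\binom{n-i-1}{2k-2}$ through elementary relations such as $\binom{n-i+1}{2k}=\binom{n-i-1}{2k-2}\cdot\frac{(n-i+1)(n-i)}{(2k)(2k-1)}$, so that $\binom{n-i-1}{2k-2}$ factors out of the whole identity. The factor $\frac{n^2-i^2}{4i^2-1}=\frac{(n-i)(n+i)}{(2i-1)(2i+1)}$ is poised to absorb the residual $n$-dependence cleanly; multiplying through by the common denominator reduces everything to a polynomial identity in $n$ whose coefficients are rational functions of $i$ and $k$ built from the three ratios $\binom{i+k-1}{k}/\binom{2i+2k-2}{2k}$, $\binom{i+k}{k-1}/\binom{2i+2k}{2k-2}$, and $\binom{i+k}{k}/\binom{2i+2k}{2k}$. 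Matching coefficients of powers of $n$ then yields a short list of scalar identities verifiable by direct factorial calculation. The main obstacle is the algebraic bookkeeping: three overlapping binomial products leave ample room for arithmetic slips, so a cleaner fallback is induction on $n-i$ with $k$ held fixed, which pushes the verification back into the recurrence itself and relies on previously established cases rather than a single monolithic binomial identity.
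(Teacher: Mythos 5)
Your setup matches the paper's proof: both dispatch the $k=0$ and ``otherwise'' clauses the same way (noting $\binom{n-i+1}{2k}=0$ when $n-i\le 2k-2$) and reduce the recursive clause to exactly the binomial identity you display, after substituting the closed form and $\gamma_{n-i}^2=\frac{n^2-i^2}{4i^2-1}\lambda_*^2$ from Theorem \ref{thm:stog}. The only difference is how the identity is closed: your coefficient-matching plan does work (after pulling out the common factors $(n-i)$ and $\binom{n-i-1}{2k-2}$ the identity is linear in $n$, leaving two scalar checks), but the paper's route is cleaner --- it factors out $\binom{i+k-1}{k}/\binom{2(i+k-1)}{2k}$, rewrites $\frac{2i-1}{2(i+k)-1}\binom{n-i}{2k}$ as $\binom{n-i}{2k}-\frac{n-i-2k+1}{2(i+k)-1}\binom{n-i}{2k-1}$, and finishes with Pascal's rule $\binom{n-i}{2k}+\binom{n-i}{2k-1}=\binom{n-i+1}{2k}$, sidestepping the bookkeeping you flag as the main risk.
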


Theorem \ref{thm:stog} and Lemma \ref{lem:differenceeq} are proven in Appendices \ref{app:proofstog} and \ref{app:proofdifferenceeq} respectively. This leads us quite easily to the following lemma.

\begin{lemma} \label{lem:lambdastar}
    The choice of $\gamma_1 = 1$ implies that 
    \begin{align*}
        \lambda_* = -\sqrt{2n-3}.
    \end{align*}
\end{lemma}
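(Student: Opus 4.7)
The plan is to apply Theorem \ref{thm:stog} directly and specialize to the index that produces $\gamma_1$, then solve for $\lambda_*$ using $\gamma_1=1$. Since the formula is $\gamma_{n-i}^2=\frac{n^2-i^2}{4i^2-1}\lambda_*^2$, the value $\gamma_1$ corresponds to $n-i=1$, i.e.\ $i=n-1$. Substituting this single value is the entire mechanical content; no further interaction with the recurrence in Equation \ref{eq:sjk} is needed.

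The key algebraic step is to simplify the ratio of binomial-like expressions. The numerator factors as a difference of squares, $n^2-(n-1)^2=2n-1$. The denominator similarly factors as $4(n-1)^2-1 = (2(n-1)-1)(2(n-1)+1) = (2n-3)(2n-1)$. The factor of $2n-1$ cancels, giving
\[
\gamma_1^2=\frac{\lambda_*^2}{2n-3}.
\]
Imposing $\gamma_1=1$ then yields $\lambda_*^2=2n-3$.

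The only non-mechanical step is selecting the sign of $\lambda_*$. Since the entire development assumes $\bm{F}$ has a single geometric eigenvalue and since Lemma \ref{lem:asymptoticcov} requires $\bm{F}$ be negative definite for the forward process to converge to the latent distribution $\mathcal{N}(\bm{0},L^{-1}\bm{I})$, we must choose the negative root, $\lambda_*=-\sqrt{2n-3}$. This also matches the accompanying formula $\xi=n|\lambda_*|=-n\lambda_*$ from Theorem \ref{thm:cd}, which requires $\lambda_*<0$ to keep $\xi$ positive (so that $\bm{G}=\sqrt{2\xi L^{-1}}\bm{E}_{n,n}$ is real). I do not expect any serious obstacle here; the whole lemma is a one-line consequence of Theorem \ref{thm:stog} plus a sign convention, and the only thing worth emphasizing in the write-up is why the negative root is selected.
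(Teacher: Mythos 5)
Your proof is correct and follows exactly the paper's own argument: substitute $i=n-1$ into Theorem \ref{thm:stog}, cancel the common factor $2n-1$, and take the negative root. Your additional justification for the sign choice (negativity of $\lambda_*$ for convergence and positivity of $\xi$) is slightly more explicit than the paper's brief remark that ``$\lambda_*$ is designed to be negative,'' but the substance is identical.
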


\begin{proof}
Using the formula in Theorem \ref{thm:stog}
    \[1 = \gamma_1^2 = \gamma_{n-(n-1)}^2 = \frac{n^2 - (n-1)^2}{4(n-1)^2-1}\lambda_*^2 = \frac{\lambda_*^2}{2n-3}.\]

This directly implies $\lambda_* = -\sqrt{2n-3}$, as $\lambda_*$ is designed to be negative.
    
\end{proof}

\section{Experiments}
\label{sec:experiments}
To verify the efficacy of HOLD++, the algorithm is ran on the CIFAR-10 \cite{cifar10} and CelebA-HQ $256 \times 256$ \cite{CelebAHQ} datasets. The CIFAR-10 dataset was run over model orders 2 through 6. However, due to the computational costs associated with the CelebA-HQ $256 \times 256$ dataset, the best performing order on the CIFAR-10 dataset was chosen for this dataset; this happened to be $n=3$. The following HOLD++ hyper-parameters were used for both sets of experiments: $T=5.0, L^{-1}=0.5, \alpha=0.08$.

\subsection{CIFAR-10}

The experiments performed with the CIFAR-10 dataset utilized an NVIDIA A100 GPU and a training batch size of 128. Just like in the work of \cite{sterling}, a Noise Conditional Score Network++ (NCSN++) was used with 4 BigGAN type residue blocks and a DDPM attention module with resolution of 16. During inference time, the EM Method was used with 250 discrete steps and 50,000 samples, and with an evaluation batch size of 16.

\begin{table}[h]
\centering
\begin{tabular}{|c|ccccc|}
\hline
\textbf{Training Iterations} & \multicolumn{5}{c|}{\textbf{Model order $n$}} \\
\cline{2-6}
 & $2$ & $3$ & $4$ & $5$ & $6$ \\
\hline
400{,}000     & 4.34 & \textbf{3.85} & 6.89 & 9.11 & 11.25 \\
450{,}000     & 4.43 & \textbf{3.94} & 6.28 & 8.98 & 12.13 \\
\hline
\end{tabular}
\caption{FID scores for models of order $n$ on CIFAR-10}
\label{tab:fid_scores}
\end{table}

Table \ref{tab:fid_scores} reports the FID scores for each training session. Of all the tested model orders, order $n=3$ performed the best due to its lowest FID score. It is of note that model order $n=7$ was attempted, but failed to generate any meaningful images after $50{,}000$ training iterations, and thus deemed to fail for the chosen hyper-parameters. One shortcoming of this comparison method is that different model orders potentially need more iterations to converge, but we compare over the same number of training iterations. FIDs at $400{,}000$ and $450{,}000$ training iterations are provided to demonstrate that a stable FID level is reached. Samples are presented for the top two performing model orders in Figure \ref{fig:cifar10}. By visual inspection, both sets of images are of good quality.

\begin{figure}[ht]
    \centering
    \begin{subfigure}[b]{0.45\textwidth}
        \includegraphics[width=\textwidth]{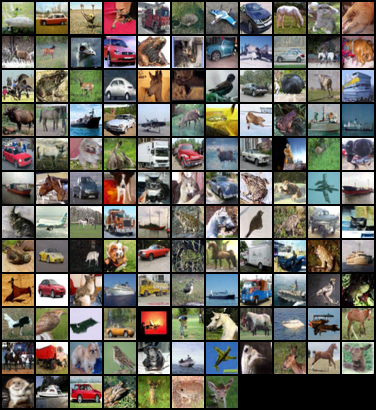}
        \caption{Model Order 2}
        \label{fig:cifarn2}
    \end{subfigure}
    \hfill
    \begin{subfigure}[b]{0.45\textwidth}
        \includegraphics[width=\textwidth]{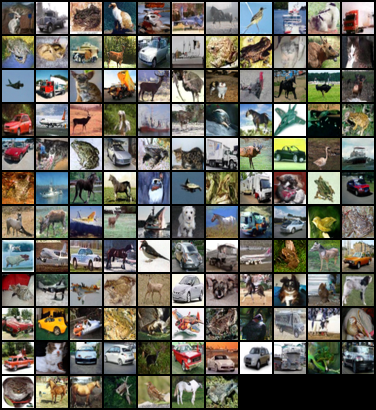}
        \caption{Model Order 3}
        \label{fig:cifarn3}
    \end{subfigure}
    \caption{CIFAR-10 images generated at $450{,}000$ training iterations for model orders 2 and 3.}
    \label{fig:cifar10}
\end{figure}

\subsection{CelebA-HQ $256\times 256$}

As mentioned previously, a single run with $n=3$ was performed on the CelebA-HQ $256\times 256$ dataset. The experiment utilized an NVIDIA H100 GPU and a training batch size of 16. The rest of the configuration remained the same as for the CIFAR-10 runs, except the learning rate was reduced to accommodate the smaller batch size. Samples without cherry picking are presented in Figure \ref{fig:celebahq}, and after $1{,}200{,}000$ training iterations, we obtain an FID of $17.07$. This FID is a bit high due to lack of refined training that stemmed from a lack of necessary compute time.

%17.066295

\begin{figure}
    \centering
    \includegraphics[width=1.0\linewidth]{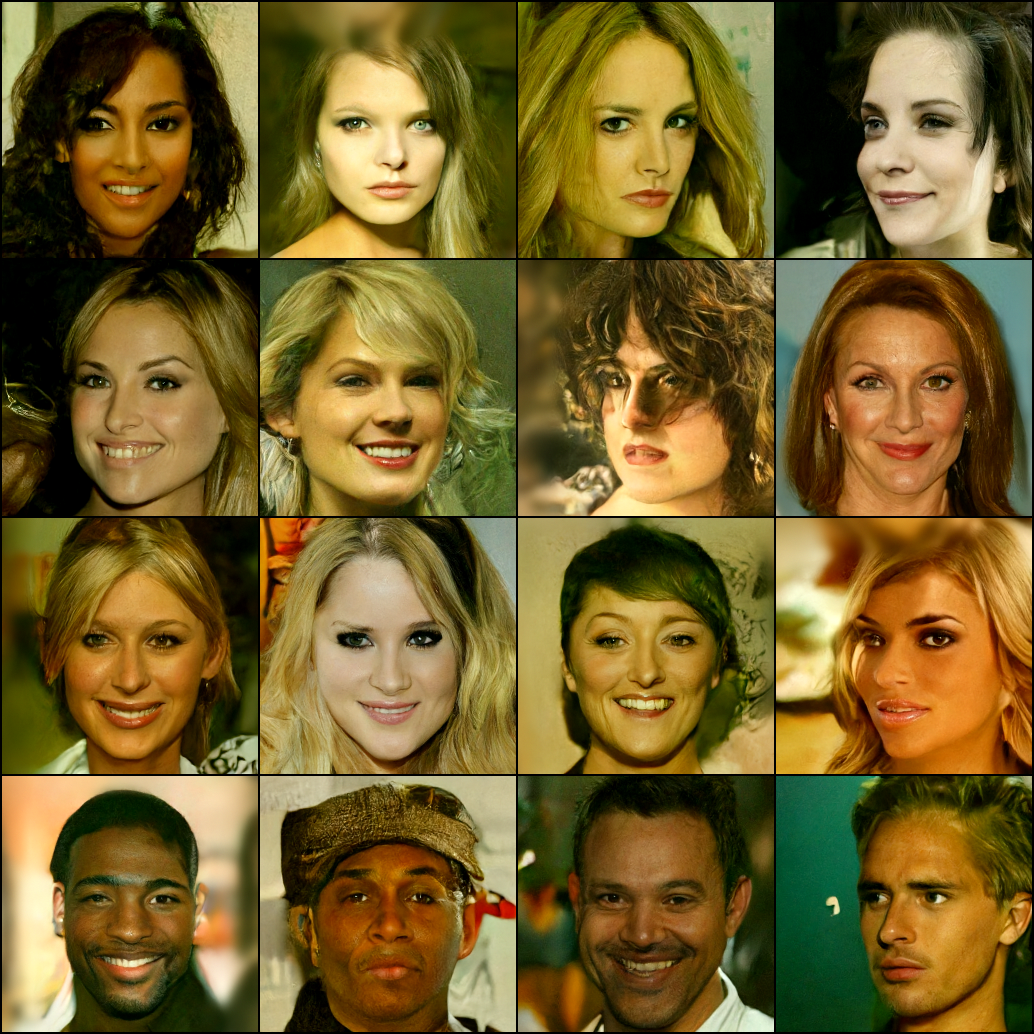}
    \caption{Generated samples from CelebA-HQ for model order 3 without cherry picking}
    \label{fig:celebahq}
\end{figure}

\section{Conclusion}
\label{sec:conclusion}
This manuscript builds a completely novel generalization of HOLD and TOLD++ that provides closed-form solutions to critically-damped higher order Langevin dynamics of arbitrary order. The closed-form solutions of the mean and covariance, and of the critically-damped parameters are rigorously derived, and it is further proven that critical-damping is optimal for the HOLD parameterization. Furthermore, the approach is validated on the CIFAR-10 and CelebA-HQ $256 \times 256$ datasets. Future work on this topic could entail research into other possible SDEs, and the desirable properties such different SDEs could provide the diffusion process.

\appendices

\section{Proof of Lemma \ref{lem:dj}}
\label{app:djproof}

\begin{proof}
The proof proceeds by induction on $j$. The base case for $d_1$ is trivial, and because

\[s_{1,1}=\gamma_1^2 s_{-1,0} + s_{0,1} = \gamma_1^2,\]

\begin{align*}
    d_2(\lambda)=\lambda^2+\gamma_1^2=(-1)^2\sum_{k=0}^{1} \lambda^{2-2k}s_{1,k}.
\end{align*}
Inductive Step: 
\[d_{n+1} = -\lambda d_{n} + \gamma_{n-1}^2 d_{n-1}\]
\[=-\lambda(-1)^{n}\sum_{k=0}^{\lfloor n/2 \rfloor} \lambda^{n-2k}s_{n-1,k} \]
\[+ \gamma_{n-1}^2 (-1)^{n-1}\sum_{k=0}^{\lfloor (n-1)/2 \rfloor} \lambda^{n-2k-1}s_{n-2,k}.\]

Now, we must break up cases into even and odd $n$:

\noindent Case $n$ is even:
When this happens, $\lfloor \frac{n}{2} \rfloor = \frac{n}{2}, \lfloor \frac{n-1}{2} \rfloor = \frac{n}{2}-1, \lfloor \frac{n+1}{2} \rfloor = \frac{n}{2},$
\[d_{n+1}= \lambda(-1)^{n-1}\sum_{k=0}^{n/2} \lambda^{n-2k}s_{n-1,k}\]
\[+ \gamma_{n-1}^2 (-1)^{n-1}\sum_{k=0}^{ n/2-1 } \lambda^{n-2k-1}s_{n-2,k}.\]

Then by reindexing the second sum, and noting $(-1)^{n-1} = (-1)^{n+1}$:
\[d_{n+1} = (-1)^{n+1}\sum_{k=0}^{n/2} \lambda^{n-2k+1}s_{n-1,k}\]
\[+ \gamma_{n-1}^2 (-1)^{n+1}\sum_{k=1}^{ n/2 } \lambda^{n-2k+1}s_{n-2,k-1}\]

\[= (-1)^{n+1}\sum_{k=0}^{n/2}\lambda^{(n+1)-2k}\left( s_{n-1,k} + \gamma_{n-1}^2 s_{n-2,k-1} \right).\]
\indent By the sequence definition:
\[= (-1)^{n+1}\sum_{k=0}^{\lfloor (n+1)/2 \rfloor}\lambda^{(n+1)-2k}s_{n,k}.\]

\noindent Case $n$ is odd: When this happens, $\lfloor \frac{n}{2} \rfloor = \frac{n-1}{2}, \lfloor \frac{n-1}{2} \rfloor = \frac{n-1}{2}, \lfloor \frac{n+1}{2} \rfloor = \frac{n+1}{2},$

\[d_{n+1} = \lambda(-1)^{n+1}\sum_{k=0}^{(n-1)/2} \lambda^{n-2k}s_{n-1,k}\]
\[ + \gamma_{n-1}^2 (-1)^{n+1}\sum_{k=0}^{(n-1)/2} \lambda^{n-2k-1}s_{n-2,k}.\]

Once again reindexing the second sum:
\[ = (-1)^{n+1}\sum_{k=0}^{(n-1)/2} \lambda^{n-2k+1}s_{n-1,k}\]
\[+ \gamma_{n-1}^2 (-1)^{n+1}\sum_{k=1}^{(n+1)/2} \lambda^{n-2k+1}s_{n-2,k-1}.\]

By adding and subtracting the final term of the sum:
\[ = (-1)^{n+1}\left(\sum_{k=0}^{(n+1)/2} \left(\lambda^{n-2k+1}s_{n-1,k}\right) - \lambda^0s_{n-1,(n+1)/2}\right)\]
\[+ \gamma_{n-1}^2 (-1)^{n+1}\sum_{k=1}^{(n+1)/2} \lambda^{n-2k+1}s_{n-2,k-1}.\]

However, note that $s_{n-1,(n+1)/2} = 0$, thus the $\lambda^0$ term cancels anyway. Thus we have:

\[ = (-1)^{n+1}\sum_{k=0}^{(n+1)/2} \lambda^{n-2k+1}\left(s_{n-1,k} + \gamma_{n-1}^2 s_{n-2,k-1}\right).\]

\indent By the sequence definition:
\[= (-1)^{n+1}\sum_{k=0}^{\lfloor (n+1)/2 \rfloor}\lambda^{(n+1)-2k}s_{n,k}.\]

\end{proof}

\section{Proof of Theorem \ref{thm:aj}}
\label{app:ajproof}
\begin{proof}
Starting with the converse, if $a_k=\binom{n}{k}(-r)^{n-k}$, then the Binomial Theorem implies
\begin{align*}
    p(x)=\sum_{k=0}^na_kx^k=\sum_{k=0}^n\binom{n}{k}(-r)^{n-k}x^k=(x-r)^n.
\end{align*}
Clearly, the $j$th derivative has $r$ as a root for $j\in\{0, \ldots, n-1\}$. To establish sufficiency, assume that $r$ is a root of $p^{(n-j)}$ for every $1\le j\le n$. Also, remember from first principles that
\begin{align}
\label{eq:dp}
    p^{(n-j)}(x) = \sum_{k=0}^j\frac{(n-k)!}{(j-k)!}a_{n-k}x^{j-k}.
\end{align}
Of course, $p$ is monic by construction, so $a_n=1$.
Now, suppose strongly that $a_{n-k} = \binom{n}{k}(-r)^k$ for $k<j$.
Then, rearranging Equation \ref{eq:dp} and using the root assumption gives
\begin{align*}
    a_{n-j} &= \frac{p^{(n-j)}(r)}{(n-j)!}-\frac{1}{(n-j)!}\sum_{k=0}^{j-1}\frac{(n-k)!}{(j-k)!}a_{n-k}r^{j-k}, \\
        &= -\frac{r^j}{(n-j)!}\sum_{k=0}^{j-1}\frac{n!}{(j-k)!k!}(-1)^k,    \\
        &=-\binom{n}{j}r^{j}\sum_{k=0}^{j-1}\binom{j}{k}(-1)^k, \\
        &=\binom{n}{j}(-r)^{j}.
\end{align*}
The last step in the preceding equation is due to the Binomial Theorem and the result follows by induction.
\end{proof}

\section{Proof of Theorem \ref{thm:stog}}
\label{app:proofstog}
\begin{proof}
    In the case that $n-i > 2k-2$, the values of $\gamma_{n-i}$ may be algebraically solved for in terms of the $s_{n-i,k}$ as:

\[\gamma_{n-i}^2 = \frac{s_{n-i,k} - s_{n-(i+1),k}}{s_{n-(i+2),k-1}}.\]
With brute force, and using the proposed expression: $s_{n-i, k}=\frac{\binom{i+k-1}{k}}{\binom{2(i+k-1)}{2k}}\binom{n-i+1}{2k}\lambda_*^{2k}$, the numerator's expression may be solved for as:

\[s_{n-i,k} - s_{n-(i+1),k}\]
\[= \lambda_*^{2k}\frac{\binom{i+k-1}{k}\binom{n-i}{2k}}{\binom{2(i+k-1)}{2k}}\left( \frac{2k(n+i)}{(n-i-2k+1)(2i+2k-1)}\right).\]

Upon dividing the denominator's expression, $\gamma_{n-i}^2$ becomes:

\[\gamma_{n-i}^2 = \lambda_*^2 \frac{\binom{i+k-1}{k}}{\binom{i+k}{k-1}} \frac{\binom{2(i+k)}{2k-2}}{\binom{2(i+k-1)}{2k}} \frac{\binom{n-i}{2k}}{\binom{n-i-1}{2(k-1)}}\]
\[\left( \frac{2k(n+i)}{(n-i-2k+1)(2i+2k-1)}\right).\]
Upon simplifying each binomial coefficient, the expression greatly simplifies:

\[\gamma_{n-i}^2 = \lambda_*^2\frac{(2i+2k-1)(n-i)(n-i-2k+1)}{2k(2i+1)(2i-1)}\]
\[\left( \frac{2k(n+i)}{(n-i-2k+1)(2i+2k-1)}\right).\]

Obvious cancellations may be observed and interestingly enough, all factors involving $k$ cancel out, leaving us with:

\[\gamma_{n-i}^2 = \frac{(n-i)(n+i)}{(2i+1)(2i-1)}\lambda_*^2 = \frac{n^2-i^2}{4i^2-1}\lambda_*^2.\]

\end{proof}

\section{Proof of Lemma \ref{lem:differenceeq}}
\label{app:proofdifferenceeq}
\begin{proof}
We establish that Equation \ref{eq:snk} together with Theorem \ref{thm:stog} is a solution to the recurrence in Equation \ref{eq:sjk}.
This is a matter of simple combinatorial arithmetic,

\[\gamma^2_{n-i}s_{n-i-2,k-1}+s_{n-i-1,k}\]
\[= \frac{n^2-i^2}{4i^2-1}\frac{\binom{i+k}{k-1}}{\binom{2(i+k)}{2k-2}}
            \binom{n-i-1}{2k-2}\lambda_*^{2k}\]
\[+ \frac{\binom{i+k}{k}}{\binom{2(i+k)}{2k}}\binom{n-i-1}{2k}\lambda_*^{2k}\]
\[= \frac{\binom{i+k-1}{k}}{\binom{2(i+k-1)}{2k}}
        \bigg[\frac{n+i}{2(i+k)-1}\binom{n-i}{2k-1}\]
\[+\frac{2i-1}{2(i+k)-1}\binom{n-i}{2k}\bigg]\lambda_*^{2k}.\]
Focusing on the second summand, we rearrange to get

\[\frac{2i-1}{2(i+k)-1}\binom{n-i}{2k}\]
\[=\left[1-\frac{2k}{2(i+k)-1}\right]\binom{n-i}{2k}\]
\[=\binom{n-i}{2k}-\frac{n-i-2k+1}{2(i+k)-1}\binom{n-i}{2k-1}.\]

Substituting this form back in, and simplfiying gives

\[\gamma^2_{n-i}s_{n-i-2,k-1}+s_{n-i-1,k}\]
\[ = \frac{\binom{i+k-1}{k}}{\binom{2(i+k-1)}{2k}}\left[\binom{n-i}{2k}+\binom{n-i}{2k-1}\right]\lambda_*^{2k}\]
\[=\frac{\binom{i+k-1}{k}}{\binom{2(i+k-1)}{2k}}\binom{n-i+1}{2k}\lambda_*^{2k} = s_{n-i,k}.\]

Of course, if $n-i\le 2k-2$ then $\binom{n-i+1}{2k}=0$; otherwise, we can easily verify that $s_{n-i,0}=1$.
So, Equation \ref{eq:snk} is a solution to the recurrence in Equation \ref{eq:sjk} under the given boundary conditions.

\end{proof}

% if have a single appendix:
%\appendix[Proof of the Zonklar Equations]
% or
%\appendix  % for no appendix heading
% do not use \section anymore after \appendix, only \section*
% is possibly needed

% use appendices with more than one appendix
% then use \section to start each appendix
% you must declare a \section before using any
% \subsection or using \label (\appendices by itself
% starts a section numbered zero.)
%

% use section* for acknowledgment
%\section*{Acknowledgment}
%The authors would like to thank...

% Can use something like this to put references on a page
% by themselves when using endfloat and the captionsoff option.
\ifCLASSOPTIONcaptionsoff
  \newpage
\fi

% trigger a \newpage just before the given reference
% number - used to balance the columns on the last page
% adjust value as needed - may need to be readjusted if
% the document is modified later
%\IEEEtriggeratref{8}
% The "triggered" command can be changed if desired:
%\IEEEtriggercmd{\enlargethispage{-5in}}

% references section

% can use a bibliography generated by BibTeX as a .bbl file
% BibTeX documentation can be easily obtained at:
% http://mirror.ctan.org/biblio/bibtex/contrib/doc/
% The IEEEtran BibTeX style support page is at:
% http://www.michaelshell.org/tex/ieeetran/bibtex/
%\bibliographystyle{IEEEtran}
% argument is your BibTeX string definitions and bibliography database(s)
%\bibliography{IEEEabrv,../bib/paper}
%
% <OR> manually copy in the resultant .bbl file
% set second argument of \begin to the number of references
% (used to reserve space for the reference number labels box)
\bibliographystyle{IEEEtran}
\bibliography{refs}

% Generated by IEEEtran.bst, version: 1.14 (2015/08/26)
\begin{thebibliography}{10}
\providecommand{\url}[1]{#1}
\csname url@samestyle\endcsname
\providecommand{\newblock}{\relax}
\providecommand{\bibinfo}[2]{#2}
\providecommand{\BIBentrySTDinterwordspacing}{\spaceskip=0pt\relax}
\providecommand{\BIBentryALTinterwordstretchfactor}{4}
\providecommand{\BIBentryALTinterwordspacing}{\spaceskip=\fontdimen2\font plus
\BIBentryALTinterwordstretchfactor\fontdimen3\font minus
  \fontdimen4\font\relax}
\providecommand{\BIBforeignlanguage}[2]{{%
\expandafter\ifx\csname l@#1\endcsname\relax
\typeout{** WARNING: IEEEtran.bst: No hyphenation pattern has been}%
\typeout{** loaded for the language `#1'. Using the pattern for}%
\typeout{** the default language instead.}%
\else
\language=\csname l@#1\endcsname
\fi
#2}}
\providecommand{\BIBdecl}{\relax}
\BIBdecl

\bibitem{diffusion2015}
J.~Sohl-Dickstein, E.~A. Weiss, N.~Maheswaranathan, and S.~Ganguli, ``Deep
  unsupervised learning using nonequilibrium thermodynamics,'' 2015.

\bibitem{diffusiondenoising}
J.~Ho, A.~Jain, and P.~Abbeel, ``Denoising diffusion probabilistic models,''
  \emph{Advances in Neural Information Processing Systems}, vol.~33, pp.
  6840--6851, 2020.

\bibitem{dockhorn2021score}
T.~Dockhorn, A.~Vahdat, and K.~Kreis, ``Score-based generative modeling with
  critically-damped langevin diffusion,'' in \emph{International Conference on
  Learning Representations}, 2022.

\bibitem{hold}
Z.~Shi and R.~Liu, ``Generative modelling with higher-order {L}angevin
  dynamics,'' \emph{arXiv preprint arXiv:2404.12814}, 2024.

\bibitem{sterling}
B.~Sterling and M.~F. Bugallo, ``Critically-damped third-order {L}angevin
  dynamics,'' in \emph{ICASSP 2025 - 2025 IEEE International Conference on
  Acoustics, Speech and Signal Processing (ICASSP)}, 2025, pp. 1--5.

\bibitem{blessing2025underdamped}
\BIBentryALTinterwordspacing
D.~Blessing, J.~Berner, L.~Richter, and G.~Neumann, ``Underdamped diffusion
  bridges with applications to sampling,'' in \emph{The Thirteenth
  International Conference on Learning Representations}, 2025. [Online].
  Available: \url{https://openreview.net/forum?id=Q1QTxFm0Is}
\BIBentrySTDinterwordspacing

\bibitem{Cheng2017UnderdampedLM}
\BIBentryALTinterwordspacing
X.~Cheng, N.~S. Chatterji, P.~L. Bartlett, and M.~I. Jordan, ``Underdamped
  langevin mcmc: A non-asymptotic analysis,'' in \emph{Annual Conference
  Computational Learning Theory}, 2017. [Online]. Available:
  \url{https://api.semanticscholar.org/CorpusID:28422826}
\BIBentrySTDinterwordspacing

\bibitem{singhal2023where}
\BIBentryALTinterwordspacing
R.~Singhal, M.~Goldstein, and R.~Ranganath, ``Where to diffuse, how to diffuse,
  and how to get back: Automated learning for multivariate diffusions,'' in
  \emph{The Eleventh International Conference on Learning Representations},
  2023. [Online]. Available: \url{https://openreview.net/forum?id=osei3IzUia}
\BIBentrySTDinterwordspacing

\bibitem{DBLP:conf/icml/SinghalGR24}
\BIBentryALTinterwordspacing
------, ``What's the score? automated denoising score matching for nonlinear
  diffusions,'' in \emph{ICML}, 2024. [Online]. Available:
  \url{https://openreview.net/forum?id=wLoESsgZIq}
\BIBentrySTDinterwordspacing

\bibitem{shi2024langwave}
Z.~Shi and R.~Liu, ``Langwave: Realistic voice generation based on high-order
  {L}angevin dynamics,'' in \emph{ICASSP 2024-2024 IEEE International
  Conference on Acoustics, Speech and Signal Processing (ICASSP)}.\hskip 1em
  plus 0.5em minus 0.4em\relax IEEE, 2024, pp. 10\,661--10\,665.

\bibitem{shi2024noisy}
------, ``Noisy image restoration based on conditional acceleration score
  approximation,'' in \emph{ICASSP 2024-2024 IEEE International Conference on
  Acoustics, Speech and Signal Processing (ICASSP)}.\hskip 1em plus 0.5em minus
  0.4em\relax IEEE, 2024, pp. 4000--4004.

\bibitem{riemanniandiffusion}
\BIBentryALTinterwordspacing
V.~D. Bortoli, E.~Mathieu, M.~J. Hutchinson, J.~Thornton, Y.~W. Teh, and
  A.~Doucet, ``Riemannian score-based generative modelling,'' in \emph{Advances
  in Neural Information Processing Systems}, A.~H. Oh, A.~Agarwal, D.~Belgrave,
  and K.~Cho, Eds., 2022. [Online]. Available:
  \url{https://openreview.net/forum?id=oDRQGo8I7P}
\BIBentrySTDinterwordspacing

\bibitem{liediffusion}
\BIBentryALTinterwordspacing
Y.~Zhu, T.~Chen, L.~Kong, E.~Theodorou, and M.~Tao, ``Trivialized momentum
  facilitates diffusion generative modeling on lie groups,'' in \emph{The
  Thirteenth International Conference on Learning Representations}, 2025.
  [Online]. Available: \url{https://openreview.net/forum?id=DTatjJTDl1}
\BIBentrySTDinterwordspacing

\bibitem{cornet2025kinetic}
\BIBentryALTinterwordspacing
F.~R.~J. Cornet, F.~Bergamin, A.~Bhowmik, J.~M. Garcia-Lastra, J.~Frellsen, and
  M.~N. Schmidt, ``Kinetic langevin diffusion for crystalline materials
  generation,'' in \emph{AI for Accelerated Materials Design - ICLR 2025},
  2025. [Online]. Available: \url{https://openreview.net/forum?id=Mttf1RoKKM}
\BIBentrySTDinterwordspacing

\bibitem{putzer}
E.~J. Putzer, ``Avoiding the {J}ordan canonical form in the discussion of
  linear systems with constant coefficients,'' \emph{The American Mathematical
  Monthly}, vol.~73, no.~1, pp. 2--7, 1966.

\bibitem{jko}
C.~Xu, X.~Cheng, and Y.~Xie, ``Normalizing flow neural networks by {JKO}
  scheme,'' \emph{Advances in Neural Information Processing Systems}, vol.~36,
  2024.

\bibitem{subspacediffusion}
B.~Jing, G.~Corso, R.~Berlinghieri, and T.~Jaakkola, \emph{Subspace Diffusion
  Generative Models}.\hskip 1em plus 0.5em minus 0.4em\relax Springer Nature
  Switzerland, 2022, p. 274–289.

\bibitem{guth2022wavelet}
F.~Guth, S.~Coste, V.~De~Bortoli, and S.~Mallat, ``Wavelet score-based
  generative modeling,'' \emph{Advances in Neural Information Processing
  Systems}, vol.~35, pp. 478--491, 2022.

\bibitem{diffusioncts}
Y.~Song, J.~Sohl-Dickstein, D.~P. Kingma, A.~Kumar, S.~Ermon, and B.~Poole,
  ``Score-based generative modeling through stochastic differential
  equations,'' \emph{arXiv preprint arXiv:2011.13456}, 2020.

\bibitem{sarkka2019applied}
S.~Särkkä and A.~Solin, \emph{Applied Stochastic Differential
  Equations}.\hskip 1em plus 0.5em minus 0.4em\relax Cambridge University
  Press, 2019, vol.~10.

\bibitem{anderson1982reverse}
B.~D. Anderson, ``Reverse-time diffusion equation models,'' \emph{Stochastic
  Processes and their Applications}, vol.~12, no.~3, pp. 313--326, 1982.

\bibitem{cifar10}
A.~Krizhevsky and G.~Hinton, ``Learning multiple layers of features from tiny
  images,'' University of Toronto, Tech. Rep., 2009.

\bibitem{CelebAHQ}
\BIBentryALTinterwordspacing
T.~Karras, T.~Aila, S.~Laine, and J.~Lehtinen, ``Progressive growing of {GAN}s
  for improved quality, stability, and variation,'' in \emph{International
  Conference on Learning Representations}, 2018. [Online]. Available:
  \url{https://openreview.net/forum?id=Hk99zCeAb}
\BIBentrySTDinterwordspacing

\end{thebibliography}

\begin{IEEEbiography}[{\includegraphics[width=1.05in,clip,keepaspectratio]{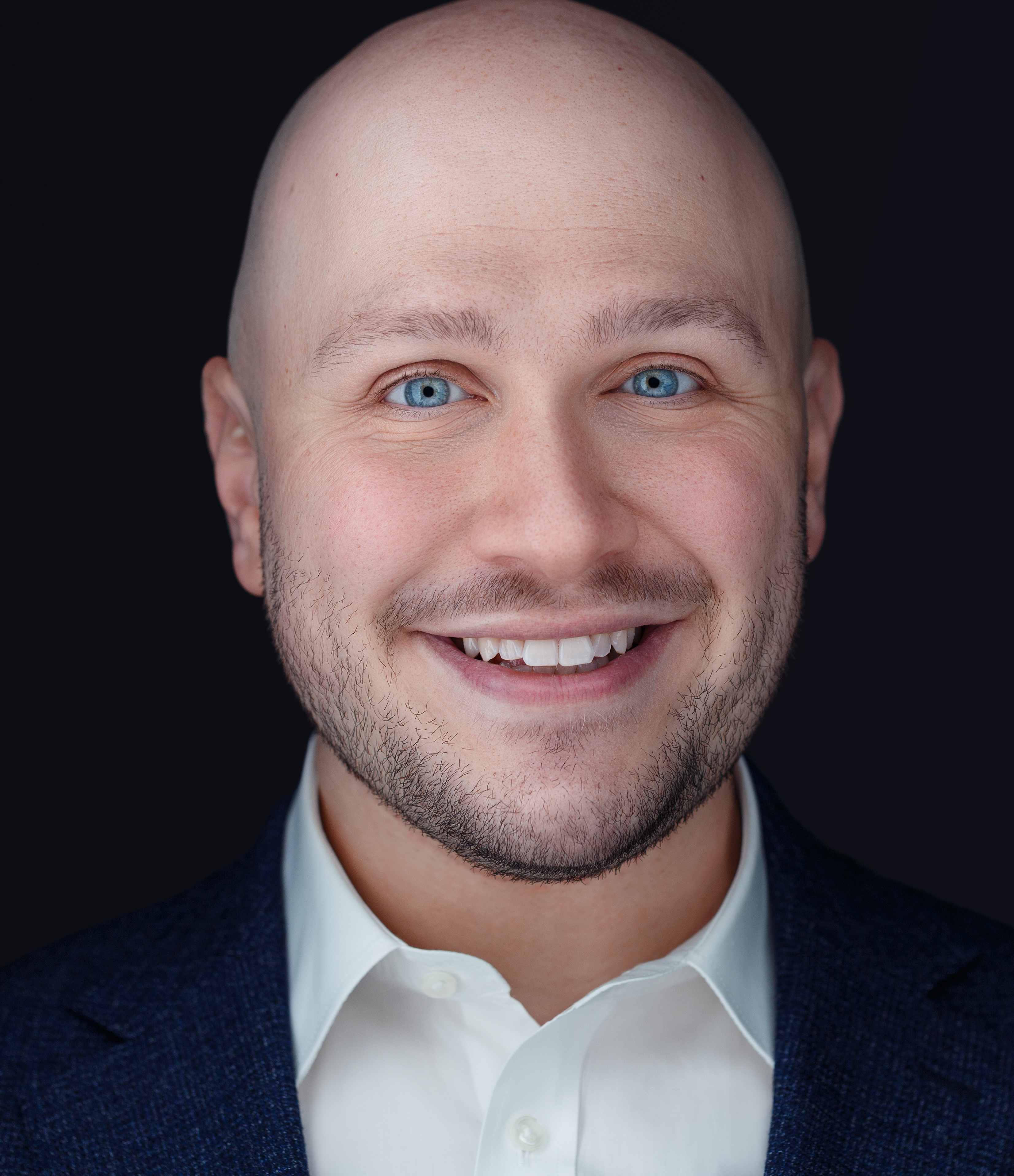}}]{Benjamin Sterling}
received the B.S. and M.S. degrees in Electrical Engineering from The Cooper Union, and the M.S. degree in Applied Mathematics from Stony Brook University. He is currently a Ph.D. candidate in Applied Mathematics at Stony Brook University. His research interests include generative AI, diffusion models, and statistical signal processing, with a focus on connecting fundamentals of linear algebra and probability theory to improve methodologies in these areas.

\end{IEEEbiography}

\begin{IEEEbiography}[{\includegraphics[width=1.05in,clip,keepaspectratio]{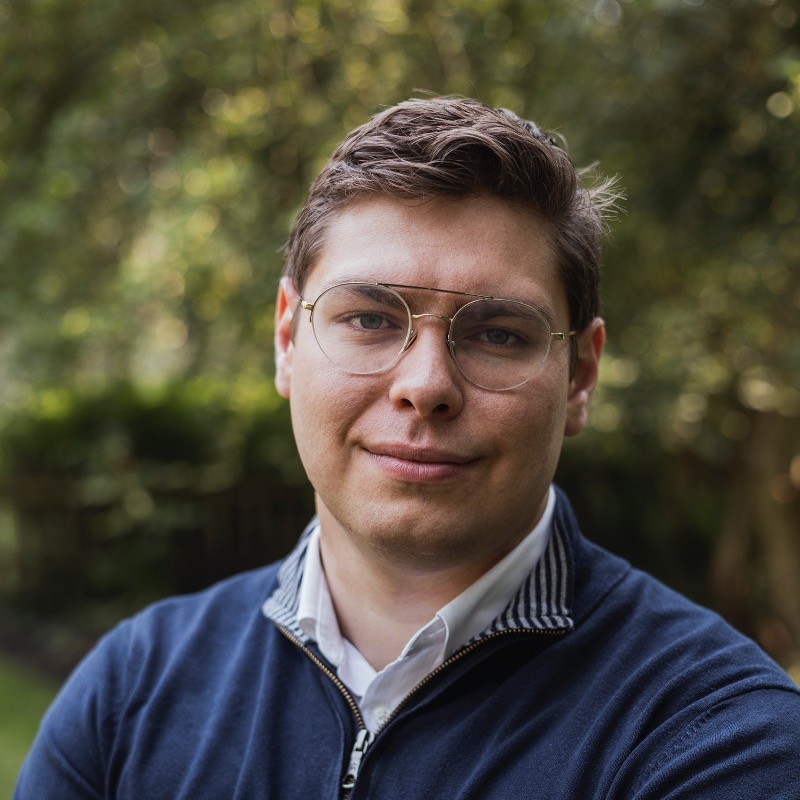}}]{Chad Gueli}
Chad Gueli received the B.S. degree in Mathematics and Political Science from Franklin and Marshall College, and the M.S. degree in Applied Mathematics from Stony Brook University. He is currently a Lead AI Engineer at Qualtrics, and has years of experience in Machine Learning and Data Analytics. His research interests lie in Theoretical Machine Learning and Applied Topology.
\end{IEEEbiography}

\begin{IEEEbiography}[{\includegraphics[width=1.05in,clip,keepaspectratio]{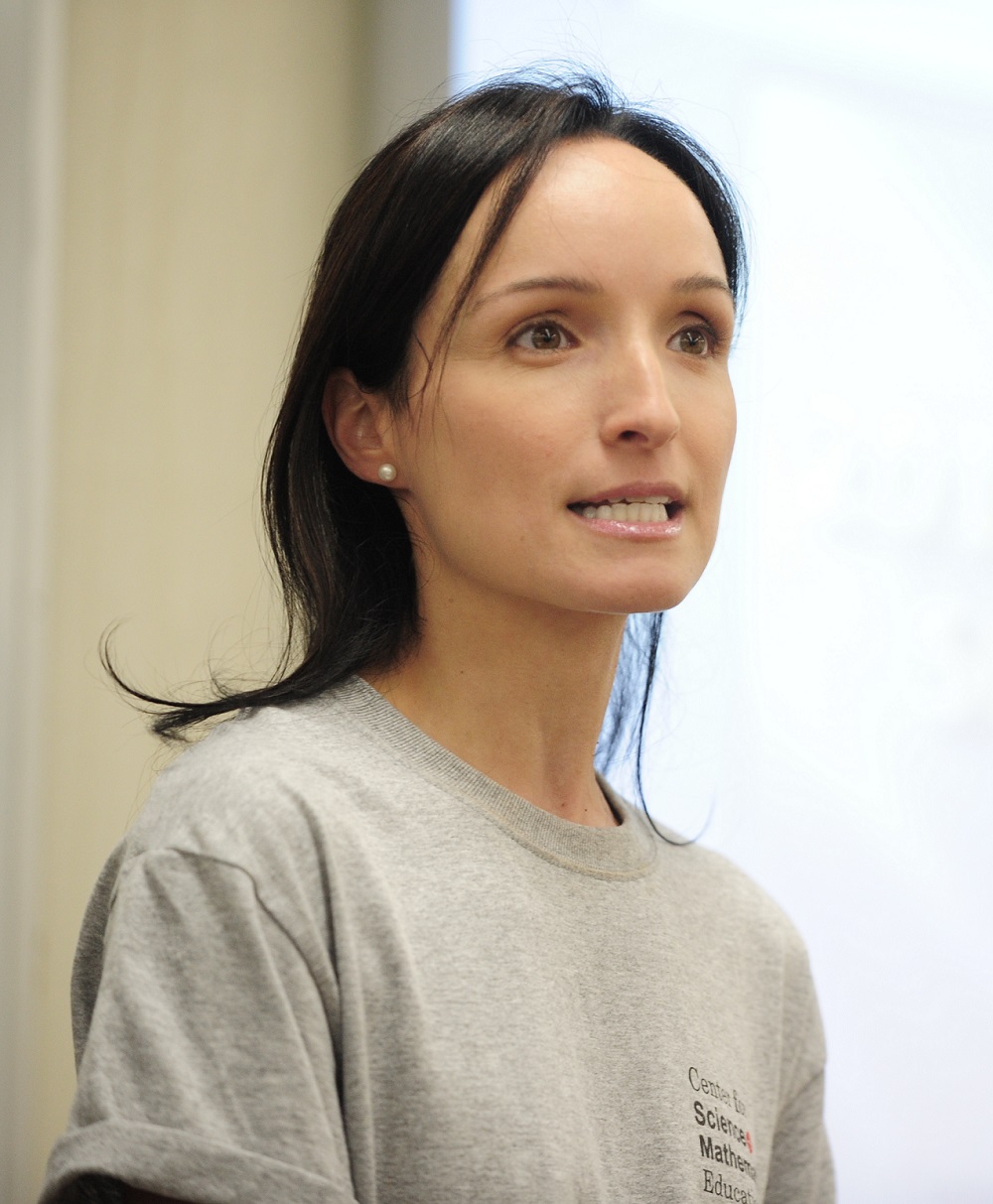}}]{M\'onica F. Bugallo}

(Senior Member, IEEE) received her Ph. D. in computer science and engineering from the University of A Coruña, Spain. She is a Professor of Electrical and Computer Engineering and is currently Vice Provost for Faculty and Academic Staff Development at Stony Brook University, NY, USA. She also serves as elected member of the IEEE SPS Board of Governors. Her research focuses on statistical signal processing, with particular emphasis on the theory of Monte Carlo methods and their application across disciplines such as biomedicine, ecology, sensor networks, and finance. Alongside this work, she has advanced STEM education by initiating successful programs that engage students at all academic stages in the excitement of engineering and research, with special attention to broadening participation among underrepresented groups. 

\end{IEEEbiography}

\end{document}